\documentclass[twoside,11pt]{article}

%

\usepackage{jmlr2e}



\jmlrheading{1}{2000}{1-48}{4/00}{10/00}{Do-kyum Kim, Geoffrey M. Voelker and Lawrence K. Saul}


\ShortHeadings{Topic Modeling of Hierarchical Corpora}{Kim, Voelker and Saul}
\firstpageno{1}

\usepackage{amsmath}
\usepackage{bbm}
\usepackage{mathrsfs}

\newenvironment{packed_enum}{
\begin{enumerate}
  \setlength{\itemsep}{0pt}
  \setlength{\parskip}{0pt}
  \setlength{\parsep}{0pt}
}{\end{enumerate}}

\usepackage{balance}
\usepackage{multirow}
\usepackage{booktabs}

\usepackage{thmtools}
\usepackage{thm-restate}

\usepackage{hyperref}

\usepackage{algorithm}
\usepackage[noend]{algorithmic}

\begin{document}

\title{Topic Modeling of Hierarchical Corpora}

\author{\name Do-kyum Kim \email dok027@cs.ucsd.edu \\
\name Geoffrey M. Voelker \email voelker@cs.ucsd.edu \\
\name Lawrence K. Saul \email saul@cs.ucsd.edu \\
\addr Department of Computer Science and Engineering\\
University of California, San Diego\\
La Jolla, CA 92093-0404, USA}

\editor{ \hspace{1mm} }

\maketitle

\begin{abstract}
We study the problem of topic modeling in corpora whose documents are organized in a multi-level hierarchy.  We explore a parametric approach to this problem, assuming that the number of topics is known or can be estimated by cross-validation.  The models we consider can be viewed as special (finite-dimensional) instances of hierarchical Dirichlet processes (HDPs).  For these models we show that there exists a simple variational approximation for probabilistic inference.  The approximation relies on a previously unexploited inequality that handles the conditional dependence between Dirichlet latent variables in adjacent levels of the model's hierarchy.  
We compare our approach to existing implementations of nonparametric HDPs.  On several benchmarks we find that our approach is faster than Gibbs sampling and able to learn more predictive models than existing variational methods.  Finally, we demonstrate the large-scale viability of our approach on two newly available corpora from researchers in computer security---one with 350,000 documents and over 6,000 internal subcategories, the other with a five-level deep hierarchy.
\end{abstract}

\begin{keywords}
topic models,
variational inference, 
Bayesian networks, 
parallel inference,
computer security
\end{keywords}

\section{Introduction}

In the last decade, probabilistic topic models have emerged as a leading framework for analyzing and organizing large collections of text~\citep{blei_topic_2009}.  These models represent documents as ``bags of words" and explain frequent co-occurrences of words as evidence of topics that run throughout the corpus.  The first properly Bayesian topic model was latent Dirichlet allocation (LDA)~\citep{blei_latent_2003}.  A great deal of subsequent work has investigated hierarchical extensions of LDA, much of it stemming from interest in nonparametric Bayesian methods~\citep{Teh:2006dy}. In these models, topics are shared across different but related corpora (or across different parts of a single, larger corpus).  One challenge of topic models is that exact inference is intractable. Thus, it remains an active area of research to devise practical approximations for computing the statistics of their latent variables.



In this paper we are interested in the topic modeling of corpora whose documents are organized in a multi-level hierarchy.  Often such structure arises from prior knowledge of a corpus's subject matter and readership.  For example, news articles appear in different sections of the paper (e.g., business, politics), and these sections are sometimes further divided into subcategories (e.g., domestic, international).  Our goal is to explore the idea that prior knowledge of this form, though necessarily imperfect and incomplete, should inform the discovery of topics.  

We explore a parametric model of such corpora, assuming for simplicity that the number of topics is known or can be estimated by (say) cross-validation.  The models that we consider assign topic proportions to each node in a corpus's hierarchy---not only the leaf nodes that represent documents, but also the ancestor nodes that reflect higher-level categories.  Conditional dependence ensures that nearby nodes in the hierarchy have similar topic proportions.  In particular, the topic proportions of lower-level categories are on average the same as their parent categories.  However, useful variations naturally arise as one descends the hierarchy.  As we discuss later, these models can also be viewed as special (finite-dimensional) instances of hierarchical Dirichlet processes~(HDPs)~\citep{Teh:2006dy}.  

Our main contributions are two.  First, we devise a new variational approximation for inference in these models.  Based on a previously unexploited inequality, the approximation enables us to compute a rigorous lower bound on the likelihood in Bayesian networks where Dirichlet random variables appear as the children of other Dirichlet random variables.   We believe that this simple inequality will be of broad interest. 

Our second contribution is to demonstrate the large-scale viability of our approach.  Our interest in this subject arose from the need to analyze two sprawling, real-world corpora from the field of computer security.   The first is a seven-year collection of over 350,000 job postings from {\it Freelancer.com}, a popular Web site for crowdsourcing.  We view this corpus as a three-layer tree in which leaf nodes represent the site's job postings and interior nodes represent the active buyers (over 6,000 of them) on the site; see Figure~\ref{figure:freelancer_hierarchy}.  The second corpus is derived from the BlackHatWorld Internet forum, in which users create and extend threads in a deep, content-rich hierarchy of pre-defined subcategories; see Figure~\ref{figure:blackhatworld_hierarchy}.  Our results break new ground for hierarchical topic models in terms of both the breadth (i.e., number of interior nodes) and depth (i.e., number of levels) of the corpora that we consider.  Moreover, it is our experience that sampling-based approaches for HDPs~\citep{Teh:2006dy} do not easily scale to corpora of this size and depth, while other variational approaches~\citep{NIPS2007_763, NIPS2012_0208, NIPS2012_1251, Hoffman:2013tz} have not been demonstrated (or even fully developed) for hierarchies of this depth.

We have described our basic approach in previously published work~\citep{Kim_tiLDA_2013}.  The present paper includes many more details, including a complete proof of the key inequality for variational inference, a fuller description of the variational EM algorithm and its parallelization, and additional experimental results on the Freelancer and BlackHatWorld corpora.


 
The organization of this paper is as follows.  In Section~\ref{sec:model}, we describe our probabilistic models for hierarchical corpora and review related work.  In Section~\ref{sec:algorithm}, we develop the variational approximation for inference and parameter estimation in these models, and describe our parallel implementation of the inference and estimation procedures.  In Section~\ref{sec:experiment}, we evaluate our approach on several corpora and compare the results to existing implementations of HDPs.  Finally, in Section~\ref{sec:discuss}, we conclude and discuss possible extensions of interest.  The appendix contains a full proof of the key inequality for variational inference and a detailed derivation of the variational EM algorithm.




\begin{figure}[tb]
	\centering
	\includegraphics[width=4in]{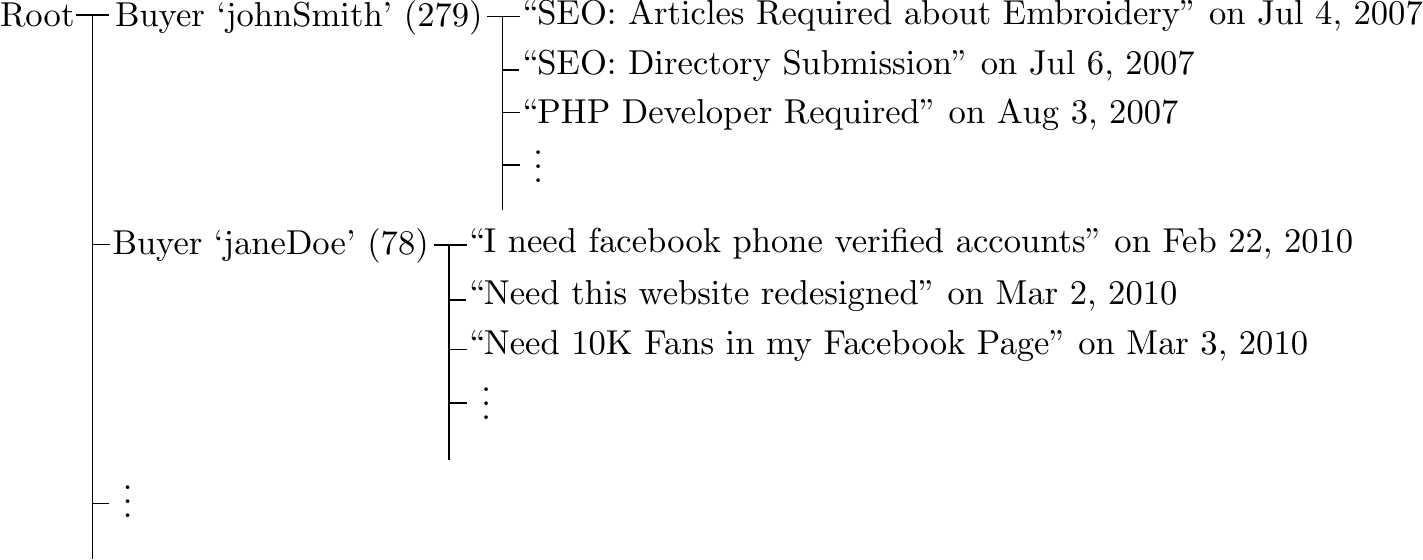}
	\caption{The hierarchy of buyers and job advertisements on Freelancer.com.  The number of ads per buyer is indicated in parentheses.  For brevity, only titles and dates of ads are shown.}
\label{figure:freelancer_hierarchy}
\end{figure}

\begin{figure}[tb]
	\centering
	\includegraphics[width=4in]{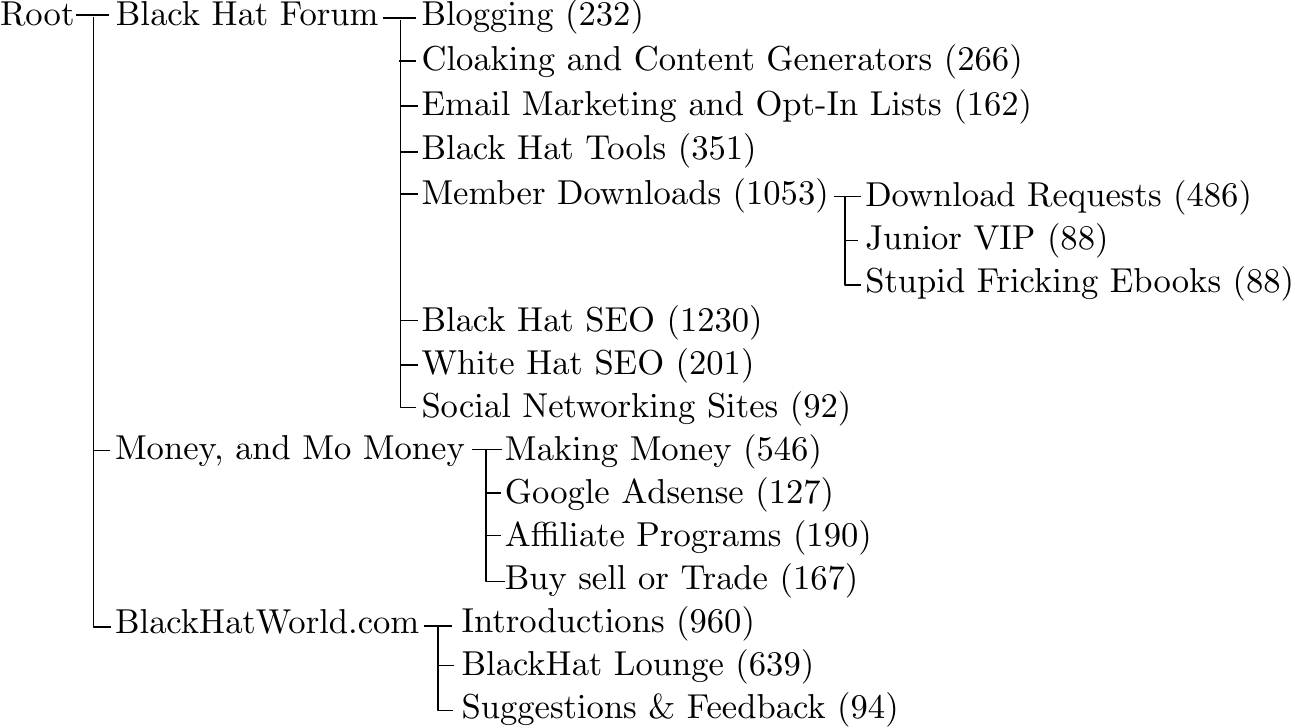}
	\caption{The hierarchy of subforums in the BlackHatWorld Internet forum.  The number of threads in each subforum is indicated in parentheses.}
\label{figure:blackhatworld_hierarchy}
\end{figure}

\section{Model and Related Work}
\label{sec:model}
Figures~\ref{figure:freelancer_hierarchy} and \ref{figure:blackhatworld_hierarchy} illustrate the types of structure we seek to model in hierarchical corpora.  This structure is most easily visualized as a tree in which the root node represents the corpus as a whole, the children of the root node represent top-level categories, the interior nodes represent subcategories of their parents, and the leaf nodes represent individual documents.  In this section we describe a probabilistic generative model for corpora of this form and discuss its relation to previous work in topic modeling. 



\subsection{Model}
Our model is essentially an extension of LDA to account for the tree structure in Figures~\ref{figure:freelancer_hierarchy} and~\ref{figure:blackhatworld_hierarchy}.  In LDA,
each document $d$ is modeled by topic proportions $\theta_d$, which are mixture weights over a finite set of $K$ topics.  In our approach, we model not only the documents in this way---the leaves of the tree---but also the categories and subcategories that appear at higher levels in the tree.  Thus for each (sub)category $t$, we model its topic proportions by a latent Dirichlet random variable $\theta_t$, and we associate one of these variables to each non-leaf node in the tree.
We use $\theta_0$ to denote the topic proportions of the root node in the tree (i.e. the corpus-wide topic proportions), and we sample these from a symmetric Dirichlet prior~$\gamma$.

The topic proportions of the corpus, its categories, subcategories, and documents are the latent Dirichlet variables in our model.  It remains to specify how these variables are related---in particular, how topic proportions are inherited from parent to child as one traverses the trees in Figures~\ref{figure:freelancer_hierarchy} and~\ref{figure:blackhatworld_hierarchy}.  We parameterize this conditional dependence by associating a (scalar) concentration parameter~$\alpha_t$ to each category~$t$.  The parameter $\alpha_t$ governs how closely the topic proportions of category $t$ are inherited by its subcategories and documents; in particular, small values of $\alpha_t$ allow for more variance, and large values for less.  More formally, let $\pi(t)$ denote the parent category of the category $t$.  Then we stipulate that:
\begin{equation}
\theta_t \sim \text{Dirichlet}\left( \alpha_{\pi(t)} \theta_{\pi(t)}\right).
\label{eq:inherit}	
\end{equation}
Likewise, our model assumes that documents inherit their topic proportions from parent categories in the same way.  In particular, we have:
\begin{equation}
\theta_d \sim \text{Dirichlet}\left( \alpha_{\pi(d)} \theta_{\pi(d)}\right),
\label{eq:inherit2}	
\end{equation}
where $\pi(d)$ in the above equation denotes the parent category of document $d$.  

The final assumption of our model is one of conditional independence: namely, that the topic proportions of subcategories are conditionally independent of their ``ancestral'' categories given the topic proportions of their parent categories.  With this assumption, we obtain the simple generative model of hierarchical corpora shown in Figure~\ref{figure:gen_process}. 
To generate an individual document, we begin by recursively sampling the topic proportions of its (sub)categories conditioned on those of their parent categories.  Finally, we sample the words of the document, conditioned on its topic proportions, in the same manner as~LDA.   In what follows we refer to this model as tree-informed LDA, or simply tiLDA.


\begin{figure}[tb]
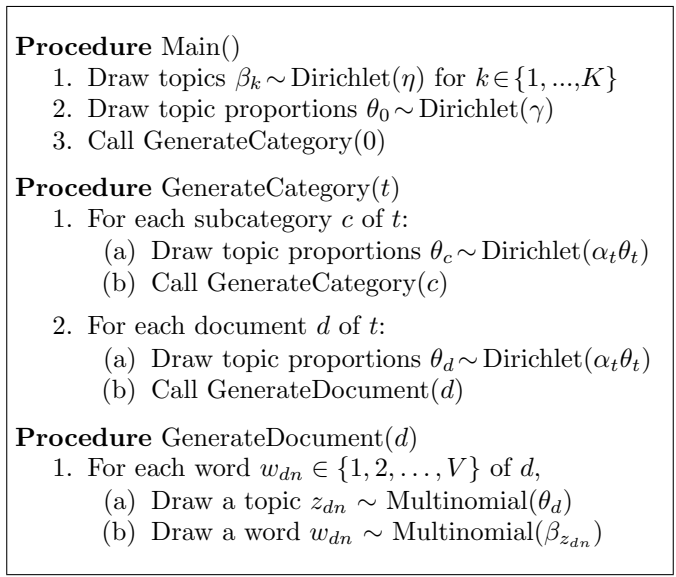
 \small
\begin{center}
\fbox{\parbox{3.4in}{
    \vspace{-1ex}
\begin{flushleft}
     {\bf Procedure} Main() 
     \vspace{-1.5ex}
     \begin{packed_enum}
			\item Draw topics $\beta_k\! \sim\!$ Dirichlet($\eta$) for $k\!\in\!\{1,...,\!K\}$
    \item Draw topic proportions $\theta_0\! \sim\!$ Dirichlet($\gamma$)
    \item Call GenerateCategory($0$)
	\end{packed_enum}

     {\bf Procedure} GenerateCategory($t$) 
          \vspace{-1.5ex}
	\begin{packed_enum}
    \item For each subcategory $c$ of $t$:
		\begin{packed_enum}
		     	\vspace{-1ex}
			\item Draw topic proportions $\theta_c\! \sim\!$ Dirichlet($\alpha_t \theta_t$)
			\item Call GenerateCategory($c$)
		\end{packed_enum}
    \item For each document $d$ of $t$:
		\vspace{-1ex}
		\begin{packed_enum}
			\item Draw topic proportions $\theta_d\! \sim\!$ Dirichlet($\alpha_t \theta_t$)
			\item Call GenerateDocument($d$)
		\end{packed_enum}
	\end{packed_enum}

	{\bf Procedure} GenerateDocument($d$) 
	     \vspace{-1.5ex}
         \begin{packed_enum}
	     \item For each word $w_{dn}\in\{1,2,\ldots,V\}$ of $d$,
			\vspace{-1ex}
			\begin{packed_enum}
				\item Draw a topic 
				$z_{dn}$ $\sim$ Multinomial($\theta_d$) 
				\item Draw a word $w_{dn}$ $\sim$ Multinomial($\beta_{z_{dn}}$)
			\end{packed_enum}
 	\end{packed_enum}
   \vspace{-2.5ex}
\end{flushleft}}}
\end{center}
\caption{The generative process of our topic model for hierarchical corpora.  The process begins in the Main procedure, sampling topic-word profiles and topic proportions from symmetric Dirichlet distributions.  Then it recursively executes the GenerateCategory procedure for each internal node of the corpus and the GenerateDocument procedure for each leaf node.}
\label{figure:gen_process}
\vspace{-2ex}
\end{figure}

In general, it is a bit unwieldy to depict the Bayesian network for topic models of this form.  However, a special case occurs when the corpus hierarchy has uniform depth---that is, when all documents are attached to subcategories at the same level.  Figure~\ref{figure:belief_network_full} shows the graphical model when all documents in the corpus are attached (for example) to third-level nodes.  

\subsection{Related Work}

Our model can be viewed as a generalization of certain previous approaches and a special instance of others.  Consider, for example, the special case of tiLDA for a ``flat" corpus, where all the documents are attached directly to its ``root."  This case of tiLDA corresponds to LDA with an asymmetric Dirichlet prior over topic proportions.  \citet{NIPS2009_0929} showed how to perform Gibbs sampling in such models and demonstrated their advantages over LDA with a symmetric Dirichlet prior.

Our approach also draws on inspiration from hierarchical Dirichlet processes (HDPs)~\citep{Teh:2006dy}.  In tiLDA, as in HDPs, the sample from one Dirichlet distribution serves as the base measure for another Dirichlet distribution.  HDPs are a nonparametric generalization of LDA in which the number of topics is potentially unbounded and can be learned from data.  We can view the generative model of tiLDA as a special case of multi-level HDPs whose base measure is finite (thus only allowing for a finite number of topics).  Though tiLDA does not possess the full richness of HDPs, our results will show that for some applications it is a compelling alternative.

Gibbs sampling is perhaps the most popular strategy for inference and learning in hierarchical topic models.  The seminal work by \citet{Teh:2006dy} developed a Gibbs sampler for HDPs of arbitrary depth and used it to learn a three-level hierarchical model of 160 papers from two distinct tracks of the NIPS conference.  \citet{Newman:2009uk} developed parallelized Gibbs samplers for two-level HDPs.  \citet{Du:2010ff} developed a collapsed Gibbs sampling algorithm for a three-level hierarchical model that is similar in spirit to ours.  One drawback to Gibbs sampling is that it can require long mixing times to obtain accurate results.  In general Gibbs sampling does not scale as well as variational inference~\citep{Blei:2014cp}.  


Many researchers have pursued variational inference in HDPs as a faster, cheaper alternative to Gibbs sampling.  \citet{NIPS2007_763} developed a framework for collapsed variational inference in two-level (but not arbitrarily deep) HDPs, and later \citet{Sato:2012ke} proposed a related but simplified approach. 
Yet another framework for variational inference was developed by~\citet{Wang:2011ux}, who achieved speedups with online updates.  While the first variational methods for HDPs truncated the number of possible topics, two recent papers have investigated online approaches with dynamically varying levels of truncation~\citep{NIPS2012_0208, NIPS2012_1251}.   There have been many successful applications of variational HDPs to large corpora; however, we are unaware of any actual applications to hierarchical corpora (i.e., involving HDPs that are three or more levels deep).   It seems fair to say that variational inference in nonparametric Bayesian models involves many complexities (e.g., auxiliary variables, stick-breaking constructions, truncation schemes) beyond those in parametric models.  We note that even for two-level HDPs, the variational approximations already require a good degree of cleverness (sometimes just to identify the latent variables).

The above considerations suggest regimes where an approach such as tiLDA may compete favorably with nonparametric HDPs.  In this paper, we are interested in topic models of large corpora with known hierarchical structure, sometimes many levels deep.  In addition, the corpora are static, not streaming; thus we are not attempting to model the introduction of new (or a potentially unbounded number of) topics over time.  We seek a model richer than LDA, one that can easily corporate prior knowledge in the form of Figures~\ref{figure:freelancer_hierarchy} and~\ref{figure:blackhatworld_hierarchy}, but with a minimum of additional complexity.  (Here it bears reminding that LDA---in its most basic form---still remains a wildly popular and successful model.)  We shall see that tiLDA fits the bill perfectly in this regime.  

Finally, our work assumes that the corpus has a {\it known} hierarchical structure.  A more ambitious direction is to relax this assumption and infer such structure (either among documents, or among topics) directly from observed data.  \citet{Adams_Tree_2010} proposed a nonparametric prior on tree structures of data and developed a Markov chain Monte Carlo algorithm for inference.  Other efforts in this direction include the nested Chinese restaurant process~\citep{Blei_Nested_2010} and the Pachinko allocation model~\citep{Li_Pachinko_2006}.  Perhaps the corpora in this paper can provide larger testbeds (with some notion of ground truth) to explore these more ambitious models.

\begin{figure}[tb]
	\centerline{\includegraphics[width=4in]{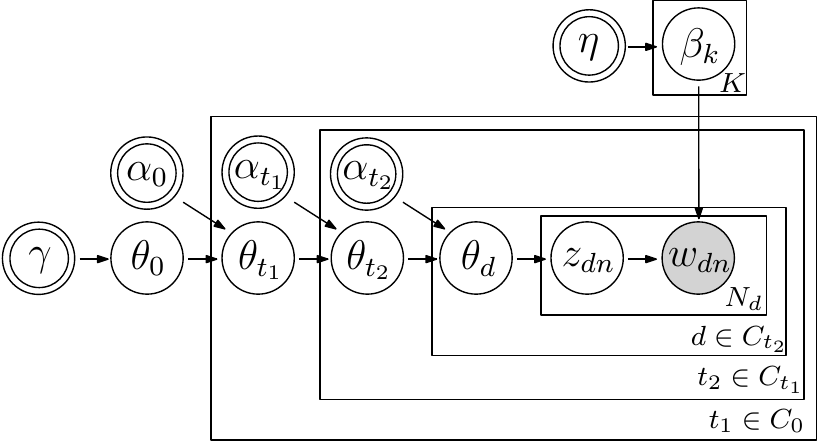}}
	\caption{Graphical model for tiLDA in which all documents of a hierarchical corpus are attached to third-level nodes. 
	Here $C_t$~denotes the set of indexes for the subcategories and documents of category~$t$, and $N_d$~denotes the length of the document~$d$.
	}
\label{figure:belief_network_full}
\end{figure}

\section{Algorithms}
\label{sec:algorithm}
In this section we develop the algorithms for inference and learning in tiLDA and also describe our parallelized implementation.

\subsection{Variational Inference}
\label{sec:inference}

The problem of inference in tiLDA is to compute the posterior distribution over the model's latent variables given the observed words in the corpus.  In tiLDA, the latent variables are the topic proportions $\theta_t$ of each category (or subcategory)~$t$, the topic proportions $\theta_d$ of each document $d$, the topic $z_{dn}$ associated with each word $w_{dn}$, and the multinomial parameters $\beta_k$ for each topic. Exact inference is not possible; approximations are required.  Here we pursue a variational method for approximate inference~\citep{jordan_introduction_1999} that generalizes earlier approaches to LDA~\citep{blei_latent_2003}.

The variational method is based on a parameterized approximation to the posterior distribution over the model's latent variables.  The approximation takes the fully factorized form:
\begin{align}
q(\theta, z, \beta | \nu, \rho, \lambda ) = \Big[ \prod_k\! q(\beta_k|\lambda_k) \Big] \Big[ \prod_t\! q(\theta_t|\nu_t) \Big]
							    \Big[ \prod_d\! q(\theta_d|\nu_d) \prod_n\! q(z_{dn}|\rho_{dn}) \Big],
\label{eq:variational}
\end{align}
where the parameters $\nu_t$, $\nu_d$, $\rho_{dn}$, and $\lambda_k$ are varied to make the approximation as accurate as possible.  The component distributions in this variational approximation are the exponential family distributions:
\begin{displaymath}
\begin{array}{ll}
\mbox{\hspace{1.2ex}}\theta_t \sim {\rm Dirichlet}(\nu_t),\quad &
\theta_d \sim {\rm Dirichlet}(\nu_d), \\
z_{dn} \sim {\rm Multinomial}(\rho_{dn}),\quad &
\beta_k  \sim {\rm Dirichlet}(\lambda_k).
\end{array}
\end{displaymath}
Figures~\ref{figure:belief_network_full} and \ref{figure:vars} contrast the graphical models for the true posterior and its variational approximation.

\begin{figure}[t]
\centerline{\includegraphics[width=4in]{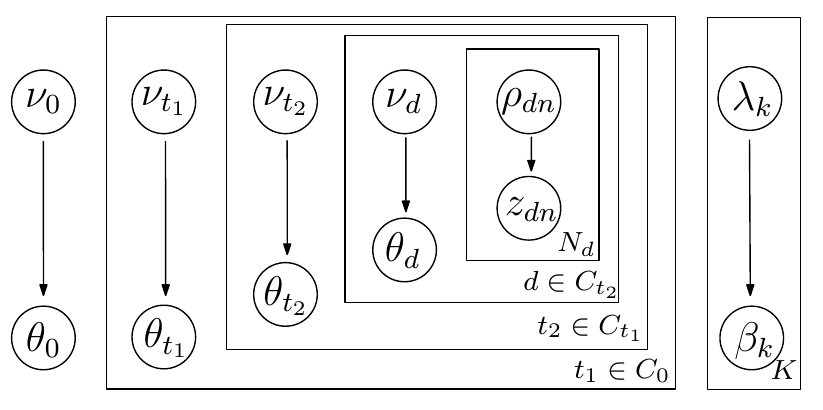}}
\caption{Variational approximation to the posterior distribution for the graphical model
in Figure~\ref{figure:belief_network_full}.}
	\label{figure:vars}
\end{figure}

The variational parameters $\nu_t$, $\nu_d$, $\rho_{dn}$, and $\lambda_k$ are found by attempting to minimize the Kullback-Leibler divergence between the approximation in eq.~(\ref{eq:variational}) and the
true posterior distribution of the model.  It can be shown that this is equivalent to maximizing
a lower bound $\mathscr{L} \leq \log p(w| \gamma, \alpha, \eta)$ on the marginal log-likelihood of the corpus.
The lower bound is given~by:
\begin{eqnarray}
\log p(w| \gamma, \alpha, \eta) 
  & = &  \log\! \int\!\! {\rm d}\beta\, {\rm d}\theta\, \sum_z\, p( \theta, z, w, \beta | \gamma, \alpha, \eta) \nonumber \\
  & = &  \log\! \int\!\! {\rm d}\beta\, {\rm d}\theta\, \sum_z\, q(\theta, z, \beta | \nu, \rho, \lambda ) 
  \left[\frac{p( \theta, z, w, \beta| \gamma, \alpha, \eta)}{q(\theta, z, \beta | \nu, \rho, \lambda ) }\right] \nonumber \\
  & \geq & \int\!\! {\rm d}\beta\, {\rm d}\theta\, \sum_z\, q(\theta, z, \beta | \nu, \rho, \lambda ) 
  \log \left[\frac{p( \theta, z, w, \beta| \gamma, \alpha, \eta)}{q(\theta, z, \beta | \nu, \rho, \lambda ) }\right]
\label{eqn:elbo1}
\end{eqnarray}
where the inequality on the third line follows from the concavity of the logarithm function and Jensen's inequality.  We may write the bound more compactly as:
\begin{equation}
	\mathscr{L}\ =\ {\rm E}_q \left[ \log p(\theta, z, w, \beta | \gamma, \alpha, \eta ) \right] + H(q), 
\label{eqn:elbo}
\end{equation}
where ${\rm E}_q$ denotes the expectation with respect to the variational distribution and $H(q)$ denotes its entropy.

So far we have developed the variational approximation for our model by following exactly the same approach used in LDA.  The lower bound in eq.~(\ref{eqn:elbo}), however, cannot be computed analytically, even for the simple factorized distribution in eq.~(\ref{eq:variational}).  In particular, new terms arise from the expectation ${\rm E}_q \left[ \log p( \theta, z, w, \beta | \gamma, \alpha, \eta ) \right]$ that are not present in the variational approximation for LDA.

Let us see where these terms arise.  Consider the model's {\it prior} distribution over latent topic proportions for each category $t$ and document $d$ in the corpus:
\begin{equation}
p( \theta | \alpha, \gamma )\ = \ p(\theta_0 | \gamma ) \prod_{t > 0}\, p(\theta_{t}| \alpha_{\pi(t)} \theta_{\pi(t)} ) 
\prod_{d}\, p(\theta_d| \alpha_{\pi(d)} \theta_{\pi(d)} ).
\label{eq:prior}
\end{equation}
In eq.~(\ref{eq:prior}), we have again used $\pi(t)$ and $\pi(d)$ to denote the parent categories of~$t$ and~$d$, respectively, in the tree.
Note that the last two terms in this prior distribution express conditional dependencies between Dirichlet variables at adjacent levels in the tree; as defined in eqs.~(\ref{eq:inherit}, \ref{eq:inherit2}).
In eq.~(\ref{eqn:elbo}), these dependencies give rise to a class of intractable averages such as ${\rm E}_q \left[  \log \Gamma( \alpha_{\pi(t)}\theta_{\pi(t)i}) \right]$.

These intractable averages stem from using a Dirichlet random variable as a base measure in another Dirichlet distribution.  This formulation also appears in the direct assignment representation of HDPs~\citep{liang-EtAl:2007, NIPS2012_1251}; in previous work, these intractable averages were sidestepped by substituting point estimates for the variational distributions on $\nu_t$.  Although such degenerate distributions can work in practice, this approach has two shortcomings: first, it does not account for potential variance in $\nu_t$, and second, it is prone to numerical error.

In our work, we exploit a novel bound to approximate the intractable averages that appear in models of this form.  To repeat, the problem arises from averages of the form ${\rm E}_q \left[  \log \Gamma( \alpha_{\pi(t)}\theta_{\pi(t)i}) \right]$, or more simply of the form ${\rm E}[\log\Gamma(x)]$ for some nonnegative random variable $x$.  We obtain a simple lower bound on such averages, ${\rm E}[\log\Gamma(x)] \geq \log\Gamma({\rm E}[x])$, by noting that $\log\Gamma(x)$ is a convex function of $x$ and appealing to Jensen's inequality.  But this simple bound goes in the wrong direction for our purposes; we need an {\it upper} bound on such averages to preserve a rigorous lower bound on the log-likelihood.  This is the crux of the problem.  The solution is contained in the following two lemmas.
\begin{restatable}{lem}{lemmaconcave}
	\label{lem:concave}
	Let $f(x) = \log \Gamma (x) + \log x - x \log x$ for $x\!>\!0$.  Then $f''(x)\!<\!0$ and $f(x)$ is a concave function of $x$.
\end{restatable}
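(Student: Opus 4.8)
The plan is to reduce the claim to a standard inequality for the trigamma function. First I would differentiate twice. Writing $\psi = (\log\Gamma)'$ for the digamma function, one computes
\[
f'(x) = \psi(x) + \frac{1}{x} - \log x - 1,
\qquad
f''(x) = \psi'(x) - \frac{1}{x} - \frac{1}{x^2},
\]
where $\psi'$ is the trigamma function. Thus the entire statement is equivalent to the inequality $\psi'(x) < \frac{1}{x} + \frac{1}{x^2}$ for every $x>0$.

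Next I would invoke the series representation $\psi'(x) = \sum_{n=0}^{\infty} (x+n)^{-2}$, valid for $x>0$; this follows by termwise differentiation of the classical series for $\psi$ (equivalently, from the Weierstrass product for $\Gamma$). Splitting off the $n=0$ term gives $\psi'(x) = \frac{1}{x^2} + \sum_{n=1}^{\infty}(x+n)^{-2}$, so it suffices to show that the tail sum is strictly less than $\frac{1}{x}$.

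For the tail I would use the telescoping comparison: for each $n\ge 1$ we have $(x+n)^2 > (x+n-1)(x+n) > 0$, hence $(x+n)^{-2} < \frac{1}{(x+n-1)(x+n)} = \frac{1}{x+n-1} - \frac{1}{x+n}$. Summing the strict inequalities over $n\ge 1$ yields $\sum_{n=1}^{\infty}(x+n)^{-2} < \frac{1}{x}$. Combining, $f''(x) < \frac{1}{x^2} + \frac{1}{x} - \frac{1}{x} - \frac{1}{x^2} = 0$, and since a twice-differentiable function with negative second derivative on an interval is (strictly) concave there, $f$ is concave on $(0,\infty)$.

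There is no real obstacle here; the only point requiring a little care is to quote the trigamma series with the correct domain and to choose the right comparison for the telescoping bound---putting $(x+n-1)(x+n)$ in the denominator, so the sum collapses to exactly $1/x$ rather than to something larger. Alternatively, one can finish with the integral estimate $\sum_{n\ge 1}(x+n)^{-2} < \int_{0}^{\infty}(x+t)^{-2}\,\mathrm{d}t = \frac{1}{x}$, which is arguably cleaner and avoids even mentioning partial fractions.
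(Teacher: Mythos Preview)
Your proof is correct and follows essentially the same approach as the paper: compute $f''(x)=\psi'(x)-\tfrac{1}{x}-\tfrac{1}{x^2}$, invoke the trigamma series $\psi'(x)=\sum_{n\ge 0}(x+n)^{-2}$, and bound the tail $\sum_{n\ge 1}(x+n)^{-2}$ by the telescoping comparison $(x+n)^{-2}<\tfrac{1}{(x+n-1)(x+n)}$ to obtain strictly $<\tfrac{1}{x}$. The only addition is your remark that the integral test gives an equally clean bound on the tail, which is a fine alternative but not a different strategy.
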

\vspace{-2ex}
\begin{restatable}{lem}{lemmaloggammabound}
	\label{lem:logGammaBound}
	Let $x$ be a nonnegative random variable with bounded ${\rm E}[\log(1/x)] < \infty$.  Then:
	\begin{equation}
	{\rm E}[\log\Gamma(x)]
	\ \leq \ \log\Gamma({\rm E}[x]) + \log{\rm E}[x] - {\rm E}[\log x]  + {\rm E}[x\log x] - {\rm E}[x]\!\log {\rm E}[x]. 
	\label{eq:logGammaBound}
	\end{equation}	
\end{restatable}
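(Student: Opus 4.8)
The plan is to obtain eq.~(\ref{eq:logGammaBound}) as essentially a one-line consequence of Jensen's inequality applied to the concave function supplied by Lemma~\ref{lem:concave}. Write $f(x) = \log\Gamma(x) + \log x - x\log x$ as in that lemma, so that $f$ is concave on $(0,\infty)$. Jensen's inequality gives ${\rm E}[f(x)] \leq f({\rm E}[x])$, i.e.
\[
{\rm E}[\log\Gamma(x)] + {\rm E}[\log x] - {\rm E}[x\log x] \ \leq\ \log\Gamma({\rm E}[x]) + \log{\rm E}[x] - {\rm E}[x]\log{\rm E}[x].
\]
Moving the two ``extra'' expectation terms from the left-hand side to the right and reordering reproduces eq.~(\ref{eq:logGammaBound}) verbatim. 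So, modulo integrability bookkeeping, the lemma is immediate once Lemma~\ref{lem:concave} is in hand.

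The substantive work is to justify that every expectation above is well-defined and that the splitting $ {\rm E}[f(x)] = {\rm E}[\log\Gamma(x)] + {\rm E}[\log x] - {\rm E}[x\log x]$ does not create an $\infty-\infty$. First I would record the boundary behavior of $f$: using $\Gamma(x) = \Gamma(1+x)/x$ we have $f(x) = \log\Gamma(1+x) - x\log x$, which extends continuously to $f(0) = 0$, and Stirling's expansion gives $f(x) = \tfrac12\log x - x + \tfrac12\log(2\pi) + O(1/x)$ as $x\to\infty$; hence $f$ is bounded above on $[0,\infty)$, so $f(x)^{+}$ is bounded and ${\rm E}[f(x)]$ is well-defined in $[-\infty,\infty)$. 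If ${\rm E}[x] = \infty$, the right-hand side of eq.~(\ref{eq:logGammaBound}) equals $+\infty$ (reading $\log\Gamma(\infty)=\infty$) and there is nothing to prove, so we may assume ${\rm E}[x] < \infty$; then the supporting-line form of Jensen's inequality at $\mu = {\rm E}[x]$ — the pointwise bound $f(x) \le f(\mu) + c\,(x-\mu)$ for a subgradient $c\in\partial f(\mu)$, integrated term by term (legitimate since ${\rm E}|x|<\infty$ and $f$ is bounded above) — yields ${\rm E}[f(x)] \le f({\rm E}[x])$.

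The only remaining point — and precisely what the hypothesis ${\rm E}[\log(1/x)] < \infty$ is there for — is to perform the split above legitimately. Here I would note that $\log\Gamma(x)$ is bounded below on $(0,\infty)$ (its minimum occurs near $x\approx 1.46$), that $x\log x \ge -1/e$, and that $\log x \le 0$ on $(0,1]$ while $\log x \ge 0$ on $[1,\infty)$; so each of the three functions has a bounded negative part off the relevant tail. Near $x=0$, both $\log\Gamma(x)$ and $-\log x$ grow like $\log(1/x)$, so the finiteness of ${\rm E}[\log(1/x)]$ is exactly what keeps their positive parts integrable there. For large $x$: either ${\rm E}[x\log x] = \infty$, in which case (since $\log\Gamma(x)\sim x\log x$) the right-hand side of eq.~(\ref{eq:logGammaBound}) already contains $+\infty$ and the claim is trivial, or ${\rm E}[x\log x] < \infty$, in which case Stirling and ${\rm E}[x]<\infty$ force ${\rm E}|\log\Gamma(x)|$ and ${\rm E}|\log x|$ finite as well, so the rearrangement is valid. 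I expect this integrability case-analysis to be the only delicate part; the inequality itself is pure Jensen applied to the cleverly chosen $f$ of Lemma~\ref{lem:concave}.
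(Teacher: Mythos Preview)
Your approach is exactly the paper's: apply Jensen's inequality to the concave function $f(x)=\log\Gamma(x)+\log x - x\log x$ from Lemma~\ref{lem:concave} and rearrange. In fact you go further than the paper, which gives only the one-line Jensen argument and omits all of the integrability bookkeeping you supply.
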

\noindent
These lemmas are proved fully in Appendix~\ref{app:theorem}, while here we give just the briefest sketch.   The concavity of $f(x)$ in Lemma~\ref{lem:concave} is obtained by adding corrective terms to $\log\Gamma(x)$; these terms alter the function's behavior at both small and large values of $x$.  Likewise, the upper bound in Lemma~\ref{lem:logGammaBound} is obtained by applying Jensen's inequality to the concave function $f(x)$.

With these lemmas we are now equipped to approximate the intractable averages, of the form ${\rm E}_q\!\left[  \log \Gamma( \alpha_{\pi(t)}\theta_{\pi(t)i}) \right]$, that appear in eq.~(\ref{eqn:elbo}).  The result we need, for expected values of Dirichlet random variables, is given by the following theorem.

\begin{restatable}{thm}{theorembound}
\label{thm:bound}
Let $\theta\sim{\rm Dirichlet}(\nu)$, and let $\alpha>0$.  As shorthand, let $\nu_0 = \sum_i \nu_i$.  Then:
\begin{equation}
{\rm E}\left[ \log{\Gamma( \alpha \theta_{i})}\right] \ \leq \ \log\Gamma \! \left( \alpha {\rm E}[\theta_i]\right)
+ \alpha(1\!-\!{\rm E}[\theta_i])/\nu_0 + (1 \!-\! \alpha  {\rm E}[\theta_i]) \left[ \log  {\rm E}[\theta_i] \!+\! \Psi(\nu_0) \!-\! \Psi(\nu_i)\right],
\label{eqn:thm}
\end{equation}
where ${\rm E}[\theta_i] = \nu_i/\nu_0$ and $\Gamma(\cdot)$ and $\Psi(\cdot)$ are respectively the gamma and digamma functions.
\end{restatable}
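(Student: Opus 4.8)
The plan is to apply Lemma~\ref{lem:logGammaBound} to the random variable $x = \alpha\theta_i$ and then evaluate the resulting averages in closed form using standard moments of the Dirichlet distribution. First I would check the hypothesis of the lemma: the marginal of $\theta_i$ is ${\rm Beta}(\nu_i,\nu_0-\nu_i)$ with strictly positive parameters, so ${\rm E}[\log(1/\theta_i)]$ is finite, and hence ${\rm E}[\log(1/x)] = {\rm E}[\log(1/\theta_i)] - \log\alpha < \infty$. The lemma then yields
\begin{equation*}
{\rm E}[\log\Gamma(\alpha\theta_i)] \ \leq \ \log\Gamma({\rm E}[x]) + \log{\rm E}[x] - {\rm E}[\log x] + {\rm E}[x\log x] - {\rm E}[x]\log{\rm E}[x],
\end{equation*}
so everything reduces to computing ${\rm E}[x]$, ${\rm E}[\log x]$, and ${\rm E}[x\log x]$ for $x=\alpha\theta_i$.

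Second, I would assemble the moments of $\theta_i$. Two are standard: ${\rm E}[\theta_i] = \nu_i/\nu_0$ and ${\rm E}[\log\theta_i] = \Psi(\nu_i) - \Psi(\nu_0)$. The only nonroutine ingredient is ${\rm E}[\theta_i\log\theta_i]$, which I would obtain by differentiating a Beta normalizer: writing $a = \nu_i$ and $b = \nu_0 - \nu_i$,
\begin{equation*}
{\rm E}[\theta_i\log\theta_i] \ = \ \frac{1}{B(a,b)}\,\frac{\partial}{\partial a}B(a+1,b) \ = \ \frac{B(a+1,b)}{B(a,b)}\bigl[\Psi(a+1) - \Psi(a+b+1)\bigr],
\end{equation*}
and then using $B(a+1,b)/B(a,b) = a/(a+b) = {\rm E}[\theta_i]$ together with the recurrence $\Psi(s+1) = \Psi(s) + 1/s$ to arrive at
\begin{equation*}
{\rm E}[\theta_i\log\theta_i] \ = \ {\rm E}[\theta_i]\bigl[\Psi(\nu_i) - \Psi(\nu_0)\bigr] + \bigl(1 - {\rm E}[\theta_i]\bigr)/\nu_0.
\end{equation*}

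Third, I would substitute $x = \alpha\theta_i$ into the bound from Lemma~\ref{lem:logGammaBound}. The $\log\alpha$ contributions cancel in pairs: $\log{\rm E}[x] - {\rm E}[\log x]$ collapses to $\log{\rm E}[\theta_i] + \Psi(\nu_0) - \Psi(\nu_i)$, and ${\rm E}[x\log x] - {\rm E}[x]\log{\rm E}[x]$ collapses to $\alpha{\rm E}[\theta_i\log\theta_i] - \alpha{\rm E}[\theta_i]\log{\rm E}[\theta_i]$. Inserting the formula for ${\rm E}[\theta_i\log\theta_i]$ produces the term $\alpha(1-{\rm E}[\theta_i])/\nu_0$ and, after collecting the coefficients of $\log{\rm E}[\theta_i]$, $\Psi(\nu_0)$, and $\Psi(\nu_i)$, the common factor $(1 - \alpha{\rm E}[\theta_i])\bigl[\log{\rm E}[\theta_i] + \Psi(\nu_0) - \Psi(\nu_i)\bigr]$. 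Together with $\log\Gamma({\rm E}[x]) = \log\Gamma(\alpha{\rm E}[\theta_i])$, this is precisely eq.~(\ref{eqn:thm}).

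I expect the main obstacle to be the identity for ${\rm E}[\theta_i\log\theta_i]$: it is the one step that requires a genuine computation (differentiating the Beta function and applying digamma recurrences) rather than bookkeeping. The remaining difficulty is simply keeping careful track of the several $\log\alpha$ terms so that they visibly cancel, and then regrouping the surviving digamma and logarithm terms into the stated compact form.
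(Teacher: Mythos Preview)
Your proposal is correct and follows essentially the same route as the paper: apply Lemma~\ref{lem:logGammaBound} with $x=\alpha\theta_i$, then plug in the Dirichlet moments ${\rm E}[\theta_i]$, ${\rm E}[\log\theta_i]$, and ${\rm E}[\theta_i\log\theta_i]$ and simplify. The paper states the identity ${\rm E}[\theta_i\log\theta_i] = {\rm E}[\theta_i]\bigl({\rm E}[\log\theta_i] + 1/\nu_i - 1/\nu_0\bigr)$ directly (which is algebraically equivalent to your expression), and omits your explicit check of the lemma's hypothesis and the bookkeeping for the $\log\alpha$ cancellations, but the argument is the same.
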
 

A proof of the theorem can be found in Appendix~\ref{app:theorem}.
Again we emphasize the direction of the bound.  The function $\log\Gamma(\cdot)$ is convex, and hence a naive application of Jensen's inequality to the left hand side of eq.~(\ref{eqn:thm}) yields the {\it lower} bound ${\rm E}[\log\Gamma( \alpha \theta_i )] \geq \log\Gamma(\alpha{\rm E}[\theta_i])$.  It is the {\it additional} terms on the right hand side of the equation that establish the theorem's {\it upper} bound.  The direction of inequality is crucial in the context of variational inference, where the upper bound is needed to maintain an overall lower bound on the log-likelihood. Thus it can be used to compute a looser (but still rigorous) lower bound $\mathscr{L}'\leq\mathscr{L}$ on the log-likelihood in terms of the model's variational parameters.  We shall see that this surrogate bound remains highly effective for inference and learning.  More details of this calculation can be found in Appendix~\ref{app:var_obj}.

We obtain the best approximation in the form of eq.~(\ref{eq:variational}) by maximizing $\mathscr{L}'$ with respect to the variational parameters $\nu$, $\rho$ and $\lambda$.
In practice, we perform the optimization by coordinate ascent in repeated bottom-up sweeps through the corpus hierarchy.
Each sweep begins by updating the parameters $\nu_d$ and $\rho_d$ attached to individual documents; these updates take essentially the same form as in LDA.
Then, once these parameters have converged, we turn to updating the variational parameters $\nu_t$ attached to different-level categories.
The bottom-up sweep continues through the different levels of the corpus until we reach the root of the corpus.
At the end of each sweep, we update the parameter $\lambda$ associated with topics.
Finally, the whole procedure repeats until $\mathscr{L}'$ converges.  For more details, we refer the reader to Appendices~\ref{app:max_doc_params}--\ref{app:max_topic_params}.

The inference procedure suggests an implementation by recursive calls along the corpus hierarchy as Algorithm 1.
The recursive function \textbf{OPT\_SUBTREE($t$)} performs optimizations on a subtree whose root node is $t$.
The function starts by initializing $\nu_t$ and then alternates between optimizing the children of $t$ and updating $\nu_t$ until $\mathscr{L}'$ converges.
The optimization on the children is done by recursively executing \textbf{OPT\_SUBTREE} for each child category of $t$ and \textbf{OPT\_DOCUMENT} for each child document of $t$.
The whole inference procedure is begun by calling \textbf{OPT\_SUBTREE} with the root node of the corpus.

\subsection{Variational Learning}

We can either fix the model parameters $\gamma$, $\alpha$ and $\eta$ or learn them from data.  For the latter, we use the lower bound from Section~\ref{sec:inference} as a surrogate for maximum likelihood estimation.  The variational EM algorithm alternates between computing the best factorized approximation in eq.~(\ref{eq:variational}) and updating the model parameters to maximize the lower bound~$\mathscr{L}'$.  The first of these steps is the variational E-step; the second is the variational M-step.   In the M-step we update the model parameters by block coordinate ascent.  In particular, we use Newton's method to update the concentration parameter~$\alpha_t$ associated to each category~$t$ as well as $\gamma$ and $\eta$ at the root of the corpus; for more details on these optimizations, we refer the reader to Appendix~\ref{app:estimate_params}.

It is useful to view the variational EM algorithm as a double-optimization over both the variational parameters (E-step) and the model parameters (M-step).  This view naturally suggests an interweaving of the two steps, and, in fact, this is how we implement the algorithm in practice; see Algorithm 1.

\begin{algorithm}[tb]
\caption{The variational EM algorithm for tiLDA. The algorithm begins in main, then invokes \textbf{OPT\_SUBTREE} recursively for each category.  At the deepest level of recursion, \textbf{OPT\_DOCUMENT} infers the hidden variables of documents given their words and prior on topic proportions (just as in LDA).} 
	\label{alg:main}
	\begin{algorithmic}[1]
	\MAIN {()}
		\STATE initialize $\gamma$, $\eta$ and $\lambda$
		\STATE \textbf{OPT\_SUBTREE}(0)
	\ENDMAIN
	\item[]
	\FUNCTION{\textbf{OPT\_SUBTREE}($t$)}
		\STATE initialize $\alpha_t$ and $\nu_t$
		\WHILE{$\mathscr{L}'$ increases}
			\FORALL{subcategory $c$ of $t$}
				\STATE \textbf{OPT\_SUBTREE}($c$)
			\ENDFOR
			\FORALL{document $d$ of $t$}
				\STATE \textbf{OPT\_DOCUMENT}($d$)
			\ENDFOR
			\STATE Update $\nu_t$ and $\alpha_t$
			\IF {t = 0} \STATE Update $\lambda$, $\eta$ and $\gamma$ \ENDIF
		\ENDWHILE
	\ENDFUNCTION
	\end{algorithmic}
\end{algorithm}

\subsection{Parallel Implementation}

Here we describe our scheme for parallelizing the recursive procedures in Algorithm 1.  In practice, we obtain a significant speedup from this parallel implementation of tiLDA.  This parallelization was necessary, for example, to obtain the results in Section~\ref{sec:experiment}.
 
One naive manner of parallelization would simply be to allocate the inference for different top-level categories to different threads of execution.  This approach, however, has two obvious limitations.  First, inference in different categories may require different amounts of time; if the goal is to minimize idle CPU cycles, then we must more intelligently distribute the overall workload across different threads. Second, the number of parallel threads at our disposal may greatly exceed the number of top-level categories.  (For example, the BlackHatWorld corpus has only three top-level categories.)  In this case, the naive approach to parallelization hardly makes the best use of available resources.  In the following, we describe a parallel implementation of tiLDA that overcomes both these limitations.

Our parallel implementation is based on two key ideas.  The first is to partition the algorithm into three types of tasks---START, DOCUMENT, and REPEAT---which we explain below.  The second is to maintain a queue of these tasks and create multiple threads that execute tasks from this queue.

A START task is associated with every internal node in the corpus hierarchy.  The task begins by initializing the node's parameters $\alpha_t$ and $\nu_t$.  After this initialization, the task then enqueues a new START task for each subcategory of the node and a DOCUMENT task for each document of the node.  In Algorithm 1, the START task corresponds to lines 5--10.

A DOCUMENT task is associated with each document in the corpus.  This task is responsible for optimizing the variational parameters $\nu_d$ and $\rho_{dn}$ for documents given their observed words and (currently inferred) parameters of their parents. In Algorithm 1, the DOCUMENT task corresponds to the procedure called in line~10.

A REPEAT task is issued at each internal node in the corpus whenever all the tasks for the node's children complete.  The REPEAT task is responsible for maximizing the lower bound on the log-likelihood $\mathscr{L}'$ with respect to the node's parameters.  We mark the node as {\it complete} if the lower bound does not improve over its value from the previous REPEAT task at the node.  Otherwise, we enqueue START and DOCUMENT tasks again for the node's children. The REPEAT task corresponds to executing lines 11--13 and then lines 6--10.

The overall algorithm begins with a START task at the root node and ends in a REPEAT task at the root node when the lower bound $\mathscr{L}'$ can no longer be improved.

\section{Experiments}
\label{sec:experiment}

In this section we evaluate tiLDA on several corpora and compare its results where possible to existing implementations of HDPs.  We followed more or less standard procedures in training.  The variational EM algorithms for tiLDA and HDPs were iterated until convergence in their log-likelihood bounds.  HDPs estimated by Gibbs sampling were trained for 5,000 iterations.
 
Since the log-likelihood of held-out data cannot be computed exactly in topic models, we use a method known as {\it document completion}~\citep{icml2009_139} to evaluate each model's predictive power.  First, for each trained model, we estimate a set of topics~$\beta$ and category topic proportions~$\theta_t$ on this set of topics. Then we split each document in the held-out set into two parts; on the first part, we estimate document topic proportions~$\theta_d$, and on the second part, we use these proportions to compute a per-word log-likelihood.  This approach permits a fair comparison of different (or differently trained) models.  

The topic proportions of held-out documents were computed as follows.  For variational inference, we simply estimated $\theta_d$ by ${\rm E}_q[\theta_d]$.  In the HDPs trained by Gibbs sampling, we sampled topic assignments $z_{dn}$ for each word in the first part of the document and computed $\theta_{dk}^s = \frac{ \alpha_{\pi(d)} \theta_{\pi(d)k} + N_{dk}^s }{ \alpha_{\pi(d)} + N_d}$, where $N_{dk}^s$ is the number of tokens assigned to $k$th topic in $s$th sample.  Finally, we averaged $\theta_{dk}^s$ over 2,000 samples after 500 iterations of burn-in.

\subsection{Comparison to HDPs}

HDPs have been evaluated on several ``flat" corpora, which in the manner of Figures~\ref{figure:freelancer_hierarchy}--\ref{figure:blackhatworld_hierarchy} we can visualize as {\it two-level} trees in which all documents are directly attached to a single root node.  In this section we compare the results from tiLDA and HDPs on three such corpora from the UCI Machine Learning Repository~\citep{Frank+Asuncion:2010}.  These corpora are: (1)~KOS---a collection of 3,430 blog articles with 467,714 tokens and a 6,906-term vocabulary; (2) Enron---a collection of 39,861 email messages with roughly 6 million tokens and a 28,102-term vocabulary; (3) NYTimes---a collection of 300K news articles with a 102,660-term vocabulary.   The full NYTimes corpus was too large for our experiments on (batch) HDPs so we extracted a subset of 80K articles with 26 million tokens.

On the KOS, Enron, and NYTimes corpora we compared tiLDA to two publicly available batch implementations\footnote{\url{http://www.stats.ox.ac.uk/~teh/software.html}}${^,}$\footnote{\url{http://www.cs.cmu.edu/~chongw/resource.html}} of HDPs, one based on Gibbs sampling~\citep{Teh:2006dy}, the other based on variational methods~\citep{Wang:2011ux}.  We denote the former by HDP-Gibbs and the latter by HDP-Variational.  For all algorithms we used the same hyperparameters ($\gamma\!=\!\alpha_0\!=\!1$) and the same symmetric Dirichlet prior on topics.  We initialized HDP-Gibbs with 100 topics, and we experimented with three settings of the truncation parameters ($K,T$) in HDP-Variational, where $K$ is the number of topics per corpus and $T$ is the number of topics per document.  These settings were $(K\!=\!150, T\!=\!15)$ as reported in previous work~\citep{Wang:2011ux} as well as $(K\!=\!300, T\!=\!50)$ and $(K\!=\!300, T\!=\!100)$.  For each corpus we only report the results from HDP-Variational for the {\it best} of these settings. In our experience, however, HDP-Variational was  sensitive to these settings, exhibiting the same or more variance than tiLDA over widely different choices for its {fixed} number of topics.

\begin{figure*}[t]
\vskip 0.1in
\begin{center}
\centerline{\includegraphics[width=7in]{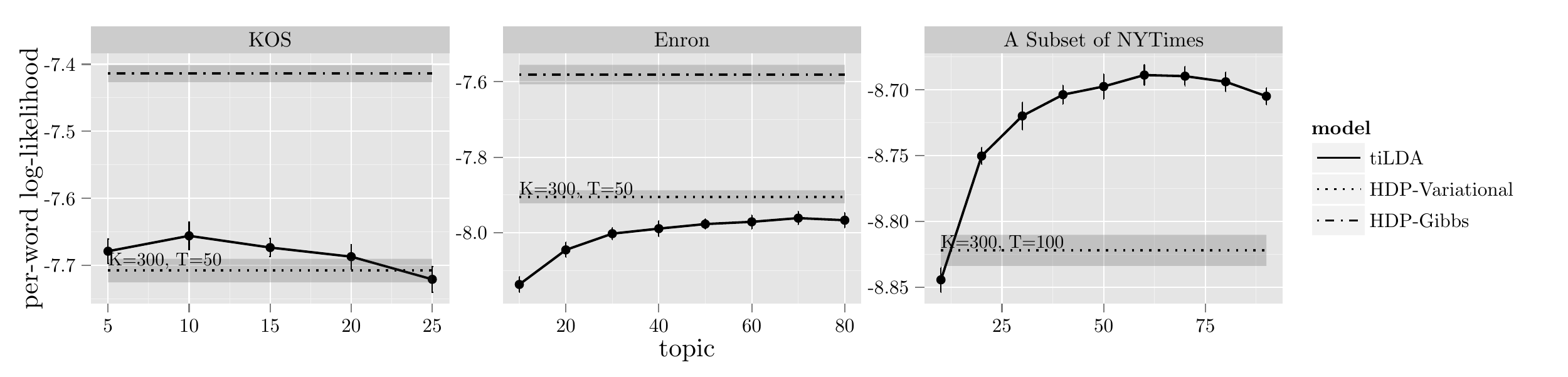}}
\caption{Predictive log-likelihood from two-level models of tiLDA and HDPs.  See text for details.}
\label{figure:2level}
\end{center}
\vskip -0.25in
\end{figure*}

Figure~\ref{figure:2level} summarizes our experimental results.  The error bars for tiLDA show the standard deviation in per-word log-likelihood over five different folds of each corpus.  (In each experiment, one fold was held out for testing while the other four were used for training.)  Also shown are the range of results on these folds for HDP-Gibbs and HDP-Variational.  On the smaller KOS and Enron corpora, we obtain our best results\footnote{It has been suggested that the careful selection of hyperparameters may reduce the gap between Gibbs sampling and variational methods in topic models~\citep{Asuncion:2009vm}; we did not explore that here.} with HDP-Gibbs; however, we emphasize that HDP-Gibbs was too slow to train even on our subset of the NYTimes corpus.  Comparing tiLDA and HDP-Variational, we find that the former does significantly better on the KOS and NYTimes corpora.  On Enron, the corpus which appears to contain the most topics, the order is reversed (but only provided that one explores the space of truncation parameters for HDP-Variational).  Though one cannot conclude too much from three corpora, these results certainly establish the viability and scalability of tiLDA.  We now turn to the sorts of applications for which tiLDA was explicitly conceived.


\subsection{Hierarchical Corpora}

In this section we demonstrate the benefits of tiLDA when it can exploit known hierarchical structure in corpora.  We experimented on three corpora with such structure.  These corpora are: (1) NIPS---a collection\footnote{\url{http://www.stats.ox.ac.uk/~teh/data.html}} of 1567 NIPS papers from 9 subject categories, with over 2 million tokens and a 13,649-term vocabulary; (2) Freelancer---a collection of 355,386 job postings by 6,920 advertisers, with over 16M tokens and a 27,600-term vocabulary, scraped from a large crowdsourcing site; (3) BlackHatWorld---a collection of 7,143 threads from a previously underground Internet forum, with roughly 1.4M tokens and a 7,056-term vocabulary.

The Freelancer corpus collects seven years of job postings from Freelancer.com, one of the largest crowdsourcing sites on the Internet.
We previously analyzed the Freelancer corpus using LDA~\citep{Kim_Topic_2011}, but this earlier work did not attempt to model the authorship of job postings as we do here.
The postings can be grouped by advertiser to form the three-level hierarchy shown in Figure~\ref{figure:freelancer_hierarchy}.  
In this hierarchy, tiLDA models the advertisers as second-level interior nodes and the job postings as third-level leaf nodes.  

The BlackHatWorld corpus collects over two years of postings from the ``BlackHatWorld'' Internet forum.
This data set was collected as part of a larger effort~\citep{Motoyama_Analysis_2011} to examine the social networks that develop in underground forums among distrustful parties.
The BlackHatWorld forum evolved to discuss abusive forms of Internet marketing, such as bulk emailing (spam).  The discussions are organized into the multi-level hierarchy shown in Figure~\ref{figure:blackhatworld_hierarchy}.
We treat the threads in these subforums as documents for topic modeling.  
(We do not consider individual posts within threads as documents because they are quite short.)
 
We preprocessed these two corpora in the same way, removing stopwords from a standard list~\citep{david_smart_2004}, discarding infrequent words that appeared in fewer than 6 documents,
and stemming the words that remain.
In both data sets, we also pruned ``barren" branches of the hierarchy: specifically, in the Freelancer corpus, we pruned advertisers with fewer than $20$ job postings, and in the BlackHatWorld corpus, we pruned subforums with fewer than $60$ threads.  

We evaluated three-level tiLDA models on the NIPS and Freelancer corpora (Figure~\ref{figure:freelancer_hierarchy}) and five-level tiLDA models on the BlackHatWorld corpus (Figure~\ref{figure:blackhatworld_hierarchy}).  For comparison we also evaluated two-level models of tiLDA that ignored the internal structure of these corpora.  We adopted the same settings as in the previous section except that we also learned the models' concentration parameters~$\alpha$.  Note that we do not have comparative results for multi-level HDPs on these corpora.  As of this time, we are unaware of Gibbs samplers for HDPs that would scale to corpora of this size and depth.  Likewise we are unaware of variational HDPs that have been implemented for general (multi-level) hierarchies.

\begin{figure*}[t]
\vskip 0.1in
\begin{center}
\centerline{\includegraphics[width=6.6in]{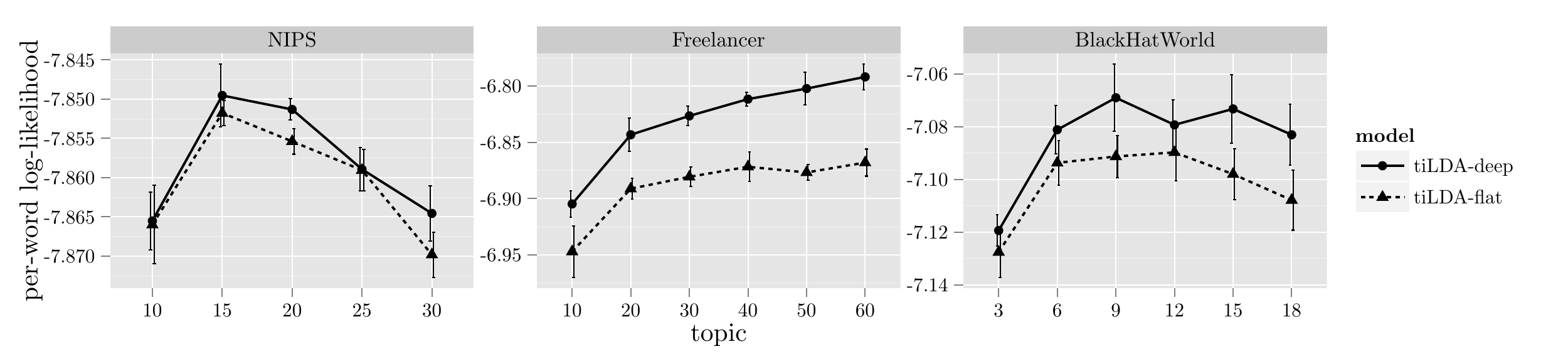}}
\caption{Predictive likelihood on the NIPS, Freelancer, and BlackHatWorld corpora from deep (multi-level) and flat (two-level) models of tiLDA, with varying numbers of topics.}
\label{figure:full_vs_2level}
\end{center}
\vskip -0.3in
\end{figure*} 

\subsubsection{Likelihood Evaluations}

Figure~\ref{figure:full_vs_2level} shows the results of these evaluations. The plot for the Freelancer corpus ({\it middle}) shows the average and standard deviation of the per-word log-likelihood over five folds. The plots for the NIPS and BlackHatWorld corpora ({\it left} and {\it right}) show the average and standard deviation over five runs, where each run averaged the test results over folds.  (We did this because the NIPS and BlackHatWorld corpora are much smaller, and the folds themselves exhibit large variance regardless of the settings.)

The results in Figure~\ref{figure:full_vs_2level}  paint a consistent picture over a wide range of choices for the number of topics,~$K$.  In every set of experiments, the deep tiLDA models of hierarchical corpora outperform the flat ones. Overall the results support the notion that deep tiLDA generalizes better for two reasons: first, because it can model different categories with different topic proportions, and second, because it shares information across different categories.  These abilities guard, respectively, against the challenges of underfitting and overfitting the data.

\subsubsection{Discovered Topics}

\begin{table}[tb] 
\begin{center} 
	\begin{tabular}{llll} 
		\multirow{ 2}{*}{\bf  ``OSN Linking''}	&\multirow{ 2}{*}{\bf  ``Ad Posting''}	&{\bf ``SEO Content}					&{\bf ``SEO Link} \\
									&							 &\multicolumn{1}{c}{\bf Generation''}	&\multicolumn{1}{c}{\bf Building''}  \\
		\hline\\
		facebook 		&ad			&articl		&post		\\ 
		fan 			&post		&keyword		&blog		\\ 
		friend 		&craigslist		&word		&forum	\\ 
		page 		&day			&topic		&comment		\\ 
		twitter 		&poster		&write		&link		\\ 
		Facebook	 	&section		&written		&site		\\ 
		account 		&citi			&origin		&Link Building		\\ 
		follow	 	&daili		&titl			&SEO		\\
	\end{tabular} 
\end{center} 
\caption{Four examples of the $K=60$ topics discovered by tiLDA on the Freelancer corpus; training time was 60 hours.
Shown are the eight most probable words for each topic.
Capitalized terms indicate project keywords.}
\label{table:topics_freelancer} 
\end{table}

We also examined the topics learned by the deep tiLDA models with the highest held-out likelihoods.  On the Freelancer corpus, which consists of job postings, these topics can be interpreted as different job types~\citep{Kim_Topic_2011}.  Table~\ref{table:topics_freelancer} shows four of the more pernicious job types on Freelancer.com identified by discovered topics.  The ``OSN (Online Social Network) Linking" topic describes jobs to generate friends and fans on sites such as Facebook and Twitter.  The ``Ad Posting" topic describes jobs to post classified ads on sites such as Craigslist.  Many other jobs are related to search engine optimization (SEO).  The ``SEO Content Generation" topic describes jobs to generate keyword-rich articles that drive traffic from search engines.  Likewise, the ``SEO Link Building" topic describes jobs to increase a Web site's Page\-Rank~\citep{Brin_Anatomy_1998} by adding links from blogs and forums.  

\begin{table}[tb] 
\begin{center} 
	\begin{tabular}{llll} 
		\multicolumn{1}{l}{\bf  ``Email}  		&\multicolumn{1}{l}{\bf  ``Google}  		&\multicolumn{1}{l}{\bf ``Affiliate}		&\multirow{2}{*}{\bf ``Blogging''}  	\\ 
		\multicolumn{1}{c}{\bf  Marketing''}		&\multicolumn{1}{c}{\bf  Adsense''}		&\multicolumn{1}{c}{\bf  Programs''}		&  	\\ 
		\hline\\
		email	&site			&DOLLAR	&blog	\\ 
		list		&traffic		&make		&forum	\\ 
		proxy	&googl		&money		&learn	\\ 
		ip		&adsens		&affili		&post	\\ 
		send		&domain		&market		&black	\\ 
		server	&ad			&product		&hat		\\ 
		address	&keyword		&month		&good	\\ 
		spam	&websit		&sell			&free	\\
	\end{tabular} 
\end{center} 
\caption{Four examples of the $K=9$ topics discovered by tiLDA on the BlackHatWorld corpus; training time was 30 minutes.
Shown are the eight most probable words for each topic. We replaced dollar amounts by the token DOLLAR.}
\label{table:topics_blackhatworld} 
\end{table}

On the BlackHatWorld corpus, the topics discovered by tiLDA relate to different types of Internet marketing.  Table~\ref{table:topics_blackhatworld} shows four particularly interpretable topics.
The ``Email Marketing" topic describes strategies for bulk emailing (spam).  The ``Google Adsense" topic describes ways for online publishers (e.g., bloggers) to earn money by displaying ads suggested by Google on their Web sites. The ``Affiliate Program" topic describes ways to earn commissions by marketing on behalf of other merchants. Finally, the ``Blogging" topic describes the use of blogs for Internet marketing.

\subsubsection{Analysis of Categories}

The multi-level tiLDA models can also be used to analyze hierarchical corpora in ways that go beyond the discovery of global topics.  Recall that each tiLDA model yields topic proportions $\theta_t$ and a concentration parameter $\alpha_t$ for each category of the corpus.  We can analyze these proportions and parameters for further insights into hierarchical corpora.  In general, they provide a wealth of information beyond what can be discerned from (say) ordinary LDA.

\begin{table}[tb]
\begin{center} 
	\begin{tabular}{ll|ll} 
		\multicolumn{1}{l}{\bf  Type}  		&\multicolumn{1}{l}{\bf  Ratio}  		&\multicolumn{1}{l}{\bf Type}		&\multicolumn{1}{l}{\bf Ratio} 	\\ 
		\hline
		SEO				&18.47\%		&OSN Linking		&2.12\% 	\\
		Affiliate Program	&3.21\%		&Bulk Emailing	&1.85\%		\\
		Captcha Solving	&2.68\%		&Account Creation	&1.42\% 	\\
		Ad Posting		&2.50\%		&Benign Jobs		&67.74\% \\
	\end{tabular} 
\end{center} 
\caption{Estimated ratio of number of buyers in job types on the Freelancer data set.}
\label{table:buyer_ratio}
\end{table}

Consider for example the Freelancer corpus.  In this corpus, the categories of tiLDA represent advertisers, and the topic proportions of these categories can be used to profile the types of jobs that advertisers are trying to crowdsource.  Summing these topic proportions over the corpus gives an estimate of the number of advertisers for each job type.  Table~\ref{table:buyer_ratio} shows the results of this estimate: it appears that nearly one-third of advertisers on Freelancer.com are commissioning abuse-related jobs, and of these jobs, the majority appear to involve some form of SEO.

We gain further insights by analyzing the concentration parameters of individual advertisers.  For example, the advertiser with the maximum concentration parameter ($\alpha_t=4065.00$) on Freelancer.com commissioned 34 projects, among which 32 have nearly the exact same description.  We also observe that advertisers with lower concentration parameters tend to be involved in a wider variety of projects.

On the BlackHatWorld corpus, the topic proportions and concentration parameters of categories generally reflect the titles of their associated subforums.  For example, the highest topic proportion (0.48) for ``Email Marketing" belongs to the subforum on `Email Marketing and Opt-In Lists,' and the highest topic proportion (0.59) for ``Blogging" belongs to the `Blogging' subforum.  The highest concentration parameter ($29.62$) belongs to the `Money, and Mo Money' subforum.  This is not surprising as this subforum itself has only four subforums as children, all of which are narrowly focused on specific revenue streams; see Figure~\ref{figure:blackhatworld_hierarchy}.

\section{Conclusion}
\label{sec:discuss}
In this paper we have explored a generalization of LDA for hierarchical corpora.  The parametric model that we introduce can also be viewed as a finite-dimensional HDP.  Our main technical contribution is Theorem~\ref{thm:bound}, which has many potential uses in graphical models with latent Dirichlet variables.  Our main empirical contribution is a parallel implementation that scales to very large corpora and deep hierarchies.  Our results on the Freelancer and BlackHatWorld corpora illustrate two real-world applications of our approach.

Unlike tiLDA, nonparametric topic models can infer the number of topics from data and grow this number as more data becomes available.  But this advantage of HDPs does not come without various complexities.  Variational inference in tiLDA does not require stick-breaking constructions or truncation schemes, and it generalizes easily to hierarchies of arbitrary depth.  For many applications, we believe that tiLDA provides a compelling alternative to the full generality of HDPs.   The approximations we have developed for tiLDA may also be useful for truncated versions of nonparametric models~\citep{Kurihara:2007vu}.  

We note one potential direction for future work.  In this paper, we have studied a batch framework for variational inference.  Online approaches, like those recently explored for LDA and HDPs~\citep{NIPS2012_0208, NIPS2012_1251, Hoffman:2013tz}, also seem worth exploring for tiLDA.  Such approaches may facilitate even larger and more diverse applications.


\if 0
\acks{We would like to acknowledge support for this project
from the National Science Foundation (NSF grant IIS-9988642)
and the Multidisciplinary Research Program of the Department
of Defense (MURI N00014-00-1-0637). }
\fi

\appendix
\section{Proof of Theorem~\ref{thm:bound}}
\label{app:theorem}

The basic steps to prove 
Theorem~\ref{thm:bound} 
are contained in two lemmas.

\lemmaconcave*
\begin{proof}
We prove concavity by showing $f''(x)<0$ for all $x>0$.  Taking derivatives, we find:
\begin{equation}
	f''(x) = \Psi' (x)  - \frac{1}{x^2} - \frac{1}{x}, 
	\label{eq:secondDeriv}
\end{equation}
where $\Psi(x)$ denotes the digamma function and $\Psi'(x)$ its derivative.
A useful identity for this derivative~\citep{abramowitz_handbook_1964}
is the infinite series representation:  
\begin{equation}
	\Psi' (x) = \sum_{k=0}^{\infty}{ \frac{1}{(x+k)^2}  }.  \nonumber
\end{equation}
The lemma follows by substituting this series representation into eq.~(\ref{eq:secondDeriv}).  In particular, we have:
\begin{eqnarray*}
f''(x) 
  & = & - \frac{1}{x}\ +\ \sum_{k=1}^{\infty}{ \frac{1}{(x+k)^2}  } \\
  &< & - \frac{1}{x} + \frac{1}{x(x+1)} + \frac{1}{(x+1)(x+2)} +\ \cdots   \\ 
  &= & - \frac{1}{x} + \left[ \frac{1}{x}\!-\!\frac{1}{x+1} \right] + \left[ \frac{1}{x+1}\!-\! \frac{1}{x+2} \right] +\ \cdots   \\ 
		   &= & 0 
\end{eqnarray*}		  
This completes the proof, but we gain more intuition by plotting $f(x)$ as shown in Figure~\ref{fig:concave}.  Note that $\log \Gamma(x)$, which contains only the first term in~$f(x)$, is a {\it convex} function of $x$.  Thus it is the other terms in $f(x)$ that flip the sign of its second derivative.  Essentially, the concavity of $f(x)$ is established by adding $\log x$ at small $x$ and by subtracting $x\log x$ at large~$x$.
\end{proof}

\begin{figure}[tb] 
	\centering
	\includegraphics[width=3in]{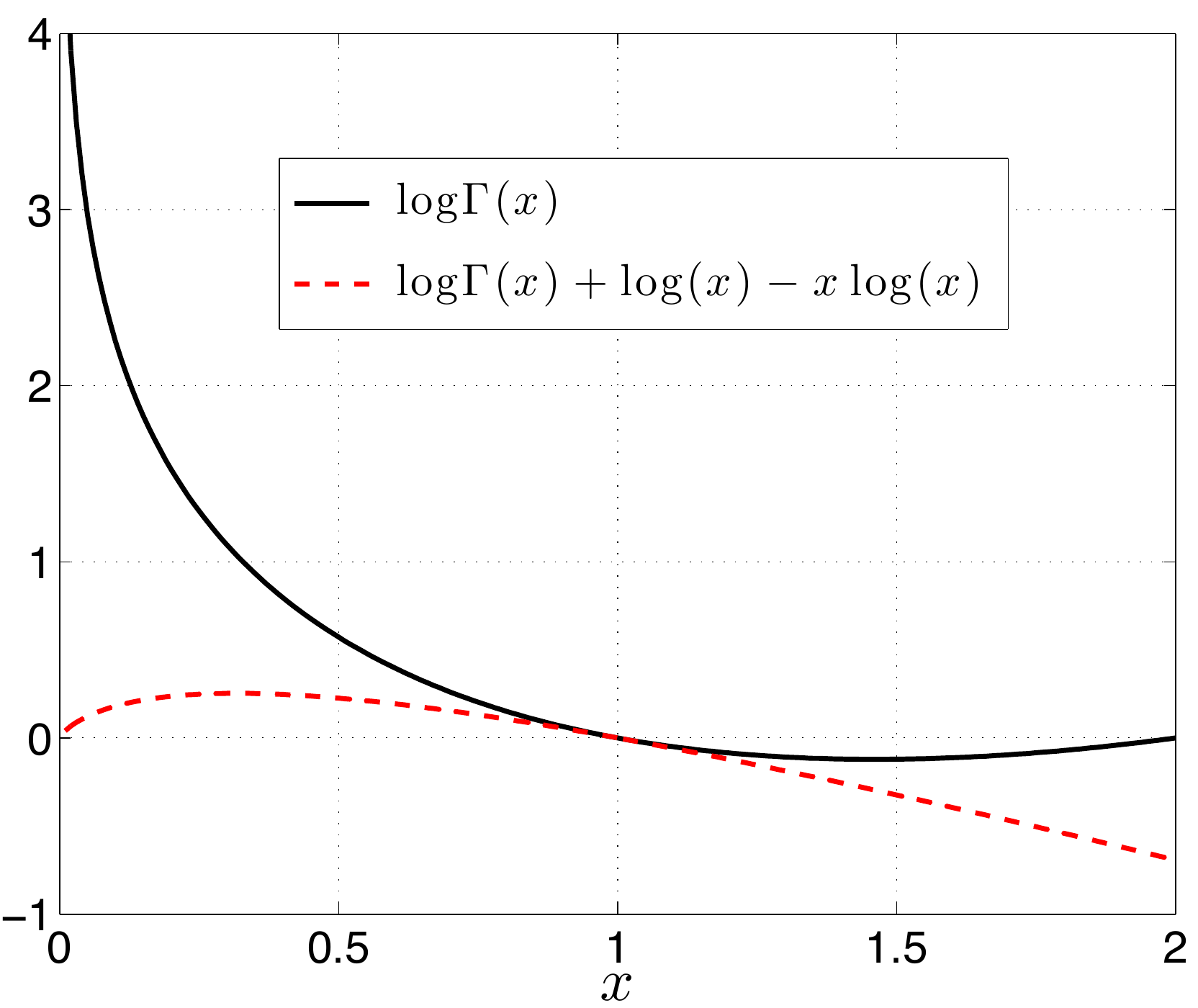}
\caption{Plots of the functions that appear in Lemma~\ref{lem:concave}.  The function $\log\Gamma(x)$ is convex, while the function $f(x) = \log \Gamma (x) + \log x - x \log x$ is concave.}
\label{fig:concave}
\end{figure}

\vspace{-2ex}
\lemmaloggammabound*
\begin{proof}
Let $f(x)$ denote a concave function on $x>0$.  From Jensen's inequality, we have that \mbox{${\rm E}[f(x)] \leq f({\rm E}[x])$}.  The result follows by setting \mbox{$f(x) = \log\Gamma(x) +  \log x - x\log x$} as in Lemma~\ref{lem:concave}.
\end{proof}

Note that a naive application of Jensen's inequality to the left hand side of eq.(\ref{eq:logGammaBound}) yields the {\it lower} bound ${\rm E}[\log\Gamma(x)] \geq \log\Gamma({\rm E}[x])$.  Thus it is the additional terms on the right hand side of eq.~(\ref{eq:logGammaBound}) that establish the {\it upper} bound.  The direction of this inequality is crucial in the context of variational inference, where the upper bound in eq.~(\ref{eq:logGammaBound}) is needed to maintain an overall lower bound on the log-likelihood.  Equipped with this lemma, we can now prove our main result.

\theorembound*
\begin{proof}
Let $\theta\sim {\rm Dirichlet}(\nu)$, and also let \mbox{$\alpha>0$}. Setting $x=\alpha\theta_i$ in eq.~(\ref{eq:logGammaBound}) gives:
\begin{equation}
{\rm E}[\log\Gamma(\alpha\theta_i)] \ \leq \   \log\Gamma(\alpha {\rm E}[\theta_i)]) + \log{\rm E}[\theta_i] - {\rm E}[\log \theta_i]  + \alpha {\rm E}[\theta_i \log\theta_i] - \alpha {\rm E}[\theta_i] \log {\rm E}[\theta_i]. 
\label{eq:bound2}
\end{equation}
All the expected values on the right hand side of this inequality can be computed analytically for Dirichlet random variables.  In particular, let $\nu_0=\sum_i \nu_i$.  Then:
\begin{eqnarray}
{\rm E}[\theta_i] & = & \frac{\nu_i}{\nu_0}, \nonumber \\
{\rm E}[\log\theta_i] & = & \Psi(\nu_i) - \Psi(\nu_0),\nonumber \\
{\rm E}[\theta_i\log\theta_i] & = & {\rm E}[\theta_i]\! \left(\!
  {\rm E}[\log\theta_i] + \frac{1}{\nu_i} - \frac{1}{\nu_0}\!\right)\!. \nonumber
\end{eqnarray}
The theorem follows from substituting these statistics into eq.~(\ref{eq:bound2}).
\end{proof}

How tight is the bound in Lemma~\ref{lem:logGammaBound}?  The question is important because we use this inequality in conjunction with the variational approximation in eq.~(\ref{eq:variational}) to generate a lower bound on the log-likelihood.  Here we make two useful observations.

\begin{figure}[tb] 
	\centering
	  \fbox{\includegraphics[width=3in]{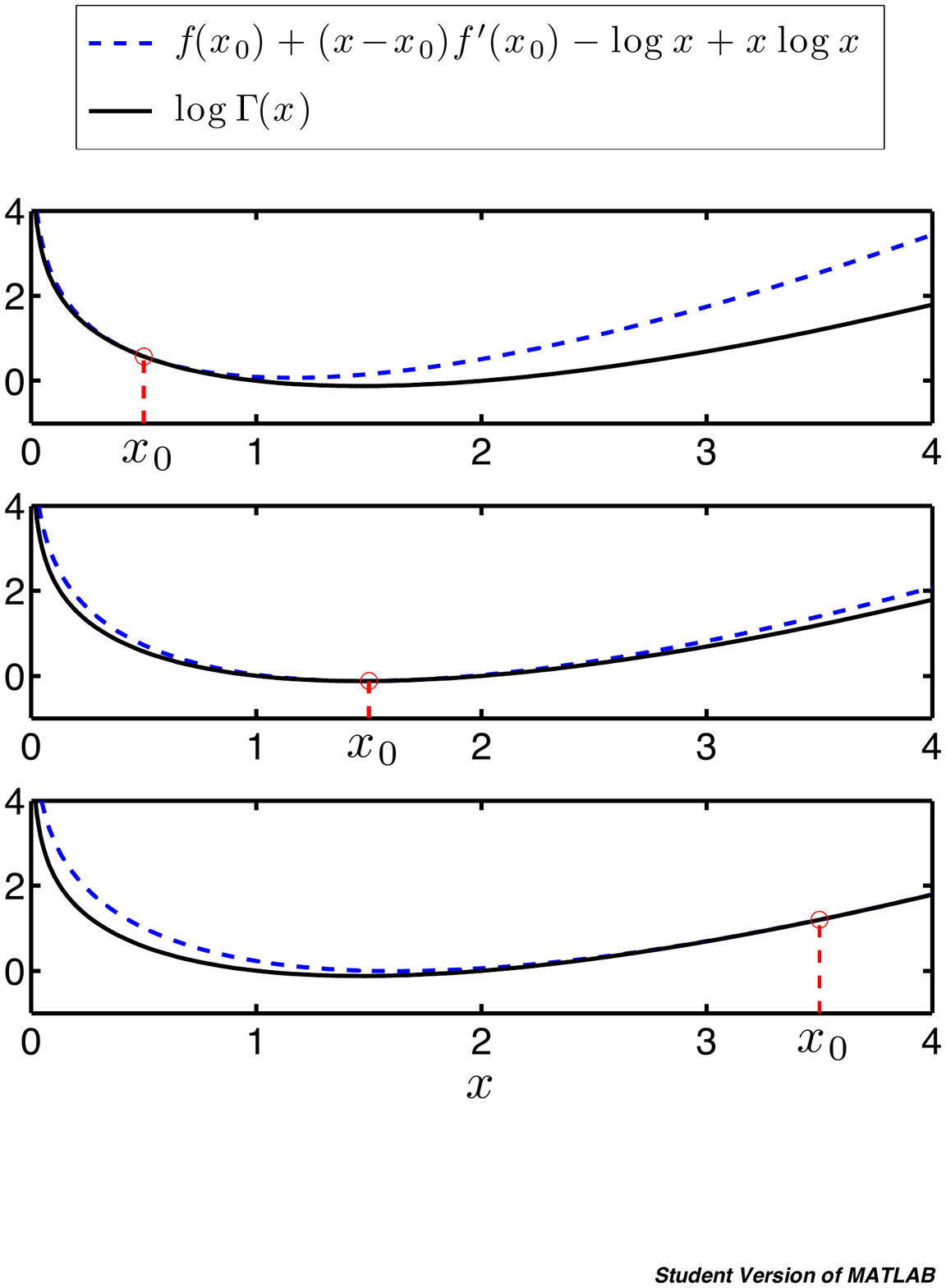}}
\caption{Plots of $\log\Gamma(x)$ and its upper bound in eq.~(\ref{eq:logGammaBound2}) for different values of $x_0$.  Note the tightness of the bounds in the vicinity of $x_0$.}
\label{fig:logGammaBound}
\end{figure}

First, we note that the bound in Lemma~\ref{lem:logGammaBound} is exquisitely tuned to the shape of the function $\log\Gamma(x)$ and the location of the expected value ${\rm E}[x]$.  To see this, we provide an alternate derivation of the result in eq.~(\ref{eq:logGammaBound}).  We begin by appealing to the concavity of $f(x)$, established in Lemma~\ref{lem:concave}; from this we
obtain the upper bound
\begin{equation}
f(x)\ \leq\ f(x_0) + f'(x_0)(x-x_0), \nonumber
\end{equation}
which holds for all values $x_0\!>\!0$.  Now we recall the definition of $f(x)$ in Lemma~\ref{lem:concave} to obtain an upper bound on $\log\Gamma(x)$.  Specifically we have:
\begin{eqnarray}
\log\Gamma(x)\!\!\!\!
  & =\!\!\!\!& f(x) - \log x + x\log x,\mbox{\hspace{23ex}} \nonumber \\ 
  & \leq\!\!\!\! & f(x_0)\!\, +\!\, f'\!(x_0\!\,)(x\!-\!x_0\!\,)\!\, -\!\, \log\!\,x\!\, +\!\, x\log\!\, x. 
  \label{eq:logGammaBound2}
\end{eqnarray}
Figure~\ref{fig:logGammaBound} illustrates this upper bound on $\log\Gamma(x)$ for different values of $x_0$; note especially its tightness in the vicinity of $x_0$.  The upper bound on ${\rm E}[\log\Gamma(x)]$ in eq.~(\ref{eq:logGammaBound}) is based on choosing the best approximation from this family of upper bounds; it is easy to show that this occurs at $x_0 = {\rm E}[x]$.  Thus we obtain the bound in Lemma~\ref{lem:logGammaBound} by taking expectations of both sides of eq.~(\ref{eq:logGammaBound2}) and setting $x_0 = {\rm E}[x]$.   

Second, we note that the upper bound in eq.~(\ref{eq:logGammaBound}) reduces to an equality in the limit of vanishing variance.  In particular, this is the limit in which ${\rm E}[\log x] \rightarrow \log {\rm E}[x]$ and also ${\rm E}[x \log x] \rightarrow {\rm E}[x] \log {\rm E}[x]$.  In this limit, the last four terms on the right hand side of eq.~(\ref{eq:logGammaBound}) vanish, and we recover the result ${\rm E}[\log \Gamma(x)] = \log\Gamma({\rm E}[x])$.  In general, we expect factorized approximations such as eq.~(\ref{eq:variational}) to work well in the regime where the true posterior is peaked around its mean value.  In this regime, we also expect the bound in eq.~(\ref{eq:logGammaBound}) to be tight.  Put another way, if it is sufficiently accurate to proceed with the factorized approximation in eq.~(\ref{eq:variational}), then we do not expect to incur much additional loss from the inequality in Lemma~\ref{lem:logGammaBound}.

\section{Inference and Parameter Estimation}
In this appendix, we derive detailed procedures for inference and parameter estimation.
We first derive the lower bound $\mathscr{L}'$ on the log-likelihood for tiLDA, then show how to maximize it with 
respect to both the variational parameters and the model parameters.

\subsection{Variational Lower Bound}
\label{app:var_obj}

The variational lower bound $\mathscr{L}$ in eq.~(\ref{eqn:elbo}) consists of two terms: the first depends on the evidence, while the second $H(q)$ computes the entropy of the variational distribution.  We expand the first evidence-dependent term as:
\begin{eqnarray}
\lefteqn{{\rm E}_q \left[ \log p(\theta, z, w, \beta | \gamma, \alpha, \eta ) \right]}\mbox{\hspace{3ex}} \nonumber \\ \nonumber \\
  & = & {\rm E}_q \left[ \log p(\theta_0 | \gamma) \right]\ +\ \sum_{t > 0}  { {\rm E}_q \left[ \log p( \theta_t | \alpha_{\pi(t)} \theta_{\pi(t)}   ) \right] }\  +\ \sum_{d}  { {\rm E}_q \left[ \log p( \theta_d | \alpha_{\pi(d)} \theta_{\pi(d)}   ) \right] } \nonumber \\
	& & +\ \sum_d \sum_{n=1}^{N_d} \Big[ {\rm E}_q \left[ \log p( z_{dn} | \theta_d) \right] + {\rm E}_q \left[ \log p(w_{dn} | z_{dn}, \beta) \right]\! \Big]\   +\ \sum_{k=1}^K {\rm E}_q \left[ \log p(\beta_k | \eta) \right]\!.\
\label{eqn:evidenceTerm}
\end{eqnarray}
Here again we use $\pi(t)$ and $\pi(d)$ to denote the parent categories of category $t$ and document $d$, respectively, in the tree.  In the above equation, only the second and third terms are new to tiLDA; the other terms are familiar from ordinary LDA, so we do not consider them further.  The two new terms have the same form, both involving expectations of Dirichlet random variables; for concreteness we focus on the term ${\rm E}_q[ \log p(\theta_t | \alpha_{\pi(t)} \theta_{\pi(t)} )]$ involving internal categories of the tree.  Recall the form of the Dirichlet distribution:
\begin{equation}
\log p(\theta_t | \alpha_{\pi(t)} \theta_{\pi(t)} )\ =\
  \log\Gamma\!\left(\alpha_{\pi(t)}\right)
  - \sum_{i=1}^K \log\Gamma\!\left(\alpha_{\pi(t)}\theta_{\pi(t)i}\right)
  + \sum_{i=1}^{K} \left[\alpha_{\pi(t)}\theta_{\pi(t)i}-1\right] \log \theta_{ti}.
\label{eq:logDirichlet}
\end{equation}
Note that the first term on the right hand side of eq.~(\ref{eq:logDirichlet}) is a constant, while the expected value of the third term is computed easily from the independence assumptions of the variational distribution.  
In particular, we have:
\begin{eqnarray*}
{\rm E}_q[ \theta_{\pi(t)i} \log \theta_{ti}] & = & {\rm E}_q[ \theta_{\pi(t)i}]\, {\rm E}_q[ \log \theta_{ti}], \\
{\rm E}_q[ \theta_{\pi(t)i}] & = & {\nu_{\pi(t)i}} / {\nu_{\pi(t)0}}, \\
{\rm E}_q[ \log \theta_{ti}] & = & \Psi(\nu_{ti}) - \Psi(\nu_{t0}), \\
\end{eqnarray*}
where $\nu_{t0} = \sum_{i=1}^K {\nu_{ti}}$.  We cannot exactly compute the expected value of the second term on the right hand side of eq.~(\ref{eq:logDirichlet}), which involves the intractable average ${\rm E}_q[ \log \Gamma( \alpha_{\pi(t)} \theta_{\pi(t)i})]$.   However, we can bound this expected value using Theorem~\ref{thm:bound}.  Combining this result with the above, we obtain the overall lower bound:
\begin{eqnarray}
\lefteqn{{\rm E}_q\bigg[ \log p(\theta_t | \alpha_{\pi(t)} \theta_{\pi(t)} )\bigg]}\nonumber\\ \nonumber\\
   & \ge & \log \Gamma \left( \alpha_{\pi(t)} \right) - \frac{ \alpha_{\pi(t)} }{ \nu_{\pi(t)0} } ( K\!-\! 1) - ( \alpha_{\pi(t)}\! -\! K ) \bigg(\! \log \nu_{\pi(t)0} - \Psi\!\left( \nu_{\pi(t)0} \right) +  \Psi\!\left(\nu_{t0}\right)\!\bigg)\nonumber \\
	 & & - \sum_{i=1}^{K}\! \left[ \log \Gamma\! \left( \alpha_{\pi(t)} \frac{ \nu_{\pi(t)i} }{ \nu_{\pi(t)0} } \right) + \left( 1\! -\! \alpha_{\pi(t)} \frac{ \nu_{\pi(t)i} }{ \nu_{\pi(t)0} } \right)\! \bigg(\! \log \nu_{\pi(t)i} - \Psi( \nu_{\pi(t)i} ) + \Psi(\nu_{ti})\! \bigg)\right] .
\end{eqnarray}
This result can be used to provide a lower bound on the second and third terms in eq.~(\ref{eqn:evidenceTerm}).  Finally, combining this lower bound with familiar terms from variational inference in LDA~\citep{blei_latent_2003}, we obtain a new lower bound $\mathscr{L}'$ on the log-likelihood for tiLDA.  The new bound is slightly looser than the standard one in eq.~(\ref{eqn:elbo}), but it remains rigorous and straightforward to compute in terms of the variational parameters.

\if 0
In the above equation, the second and third terms contain intractable averages as ${\rm E}_q \left[  \log \Gamma( \alpha_{\pi(t)}\theta_{\pi(t)i}) \right]$ and ${\rm E}_q \left[  \log \Gamma( \alpha_{\pi(d)}\theta_{\pi(d)i}) \right]$ respectively.
These intractable averages are replaced with the bound in Theorem~\ref{thm:bound} to establish the looser lower bound $\mathscr{L}'$ on the log-likelihood.
As shorthand, let $\nu_{t0} = \sum_{i=1}^K {\nu_{ti} }$ and $\lambda_{k0} = \sum_{v=1}^V { \lambda_{kv} }$.
Also let $w_{dn}^{v}$ be an indicator variable that is set when $w_{dn}$ is equal to $v$.
Expanding all the expectations respect to variational distributions, we obtain:
{\allowdisplaybreaks
\begin{align}
\mathscr{L'} =& \ \log \Gamma( \gamma ) - K\log \Gamma( \frac{\gamma}{K} ) + ( \frac{\gamma}{K} - 1) \sum_{i=1}^{K}{ \left[ \Psi(\nu_{0i}) - \Psi(\nu_{00}) \right] } \nonumber \\
	+ &  \sum_{t > 0} \left[ \log \Gamma ( \alpha_{\pi(t)} ) - \frac{ \alpha_{\pi(t)} }{ \nu_{\pi(t)0} } ( K - 1) - ( \alpha_{\pi(t)} - K ) \left\{ \log \nu_{\pi(t)0} - \Psi( \nu_{\pi(t)0}  )  \right\} - \Psi( \nu_{t0} ) ( \alpha_{\pi(t)} - K) \right. \nonumber \\
	& \hspace{9mm} \left. - \sum_{i=1}^{K}{ \left[ \log \Gamma \left( \alpha_{\pi(t)} \frac{ \nu_{\pi(t)i} }{ \nu_{\pi(t)0} } \right) + \left( 1 - \alpha_{\pi(t)} \frac{ \nu_{\pi(t)i} }{ \nu_{\pi(t)0} } \right) \left\{ \log \nu_{\pi(t)i} - \Psi( \nu_{\pi(t)i} ) + \Psi( \nu_{ti} ) \right\} \right] }  \right]  \nonumber \\
	+ &  \sum_{d} \left[ \log \Gamma ( \alpha_{\pi(d)} ) - \frac{ \alpha_{\pi(d)} }{ \nu_{\pi(d)0} } ( K - 1) - ( \alpha_{\pi(d)} - K ) \left\{ \log \nu_{\pi(d)0} - \Psi( \nu_{\pi(d)0}  )  \right\} - \Psi( \nu_{d0} ) ( \alpha_{\pi(d)} - K) \right. \nonumber \\
	& \hspace{9mm} \left. - \sum_{i=1}^{K}{ \left[ \log \Gamma \left( \alpha_{\pi(d)} \frac{ \nu_{\pi(d)i} }{ \nu_{\pi(d)0} } \right) + \left( 1 - \alpha_{\pi(d)} \frac{ \nu_{\pi(d)i} }{ \nu_{\pi(d)0} } \right) \left\{ \log \nu_{\pi(d)i} - \Psi( \nu_{\pi(d)i} ) + \Psi( \nu_{di} ) \right\} \right] }  \right]  \nonumber \\
	+ & \sum_d{ \sum_{n=1}^{N_d}{ \sum_{i=1}^{K}{ \left[ \rho_{dni} \left\{  \Psi(\nu_{di}) - \Psi( \nu_{d0}  ) \right\} \right] } } }
	+ \sum_d{ \sum_{n=1}^{N_d}{ \sum_{i=1}^{K}{ \sum_{v=1}^{V}{ \left[ \rho_{dni} w_{dn}^{v} \left\{ \Psi( \lambda_{iv} ) - \Psi( \lambda_{i0} ) \right\} \right] } }} } \nonumber \\
	+ & \sum_{k=1}^K{ \left[ \log \Gamma( \eta ) - V \log \Gamma( \frac{\eta}{V} ) + ( \frac{\eta}{V} - 1) \sum_{v=1}^{V}{ \left\{ \Psi(\lambda_{kv}) - \Psi(\lambda_{k0}) \right\} }  \right] } \nonumber \\
	- & \sum_t{ \left[ \log \Gamma( \nu_{t0} ) - \sum_{i=1}^{K}{\log \Gamma(\nu_{ti})} + \sum_{i=1}^{K}{ (\nu_{ti} - 1) \left\{ \Psi(\nu_{ti}) - \Psi( \nu_{t0} ) \right\} }  \right] } \nonumber \\
	- & \sum_d{ \left[ \log \Gamma( \nu_{d0} ) - \sum_{i=1}^{K}{\log \Gamma(\nu_{di})} + \sum_{i=1}^{K}{ (\nu_{di} - 1) \left\{ \Psi(\nu_{di}) - \Psi( \nu_{d0} ) \right\} }  \right] } \nonumber \\
	- & \sum_d{ \sum_{n=1}^{N_d}{ \sum_{i=1}^{K}{ \left[ \rho_{dni} \log \rho_{dni} \right] } }  } \nonumber \\
	- & \sum_{k=1}^K{ \left[ \log \Gamma( \lambda_{k0} ) - \sum_{v=1}^{V}{\log \Gamma( \lambda_{kv} )} + \sum_{v=1}^{V}{ ( \lambda_{kv} - 1) \left\{ \Psi(\lambda_{kv}) - \Psi( \lambda_{k0} ) \right\} }  \right] }.
	\label{eq:wholeBound}
\end{align}
}
\fi

\begin{algorithm}[tb]
\caption{Inference algorithm for a document.} 
	\label{alg:doc}
	\begin{algorithmic}[1]
		\FUNCTION{\textbf{OPT\_DOCUMENT}($d$)}
			\STATE initialize $\rho_{dni} = 1/K$ for all $n$ and $i$
			\STATE initialize $\nu_{di}  = \alpha_{\pi(d)} \frac{ \nu_{\pi(d)i} }{ \nu_{\pi(d)0} } + N_d / K $ for all $i$
			\WHILE{$\mathscr{L}'$ increases}
				\FOR{$n = 1$ to $N_d$}
					\FOR{$i = 1$ to $K$}
						\STATE $\rho_{dni} =  \exp \big\{ \Psi(\nu_{di}) +  \Psi( \lambda_{iw_{dn}} ) - \Psi( \lambda_{i0}  ) \big\}$
					\ENDFOR
					\STATE normalize $\rho_{dni}$ so that $\sum_{i=1}^K{\rho_{dni}} = 1$
				\ENDFOR
				\STATE $\nu_{di} =  \alpha_{\pi(d)} \frac{ \nu_{\pi(d)i} }{ \nu_{\pi(d)0} } + \sum_{n=1}^{N_d}{ \rho_{dni} }$	for all $i$
			\ENDWHILE
		\ENDFUNCTION
	\end{algorithmic}
\end{algorithm}

\subsection{Inference on Documents}
\label{app:max_doc_params}

There are variational parameters $\nu_d$ for the topic proportions $\theta_d$ of each document, as well as variational parameters $\rho_{dn}$ for the topics $z_{dn}$ of each word $w_{dn}$.  Algorithm 2 shows how we update these variational parameters.  Our derivation of these updates follows closely that of previous work on LDA~\citep[Appendix~A.3]{blei_latent_2003}.

We first maximize our lower bound $\mathscr{L}'$ on the log-likelihood, computed in Appendix~\ref{app:var_obj}, with respect to $\rho_{dn}$.  To this end, it is helpful to isolate just those terms in $\mathscr{L}'$ that involve the parameters~$\rho_{dn}$.  These terms are given by:
\begin{equation}
\mathscr{L}'_{[\rho_{dn}]} =  \sum_{i=1}^{K}{\rho_{dni} \bigg[ \Psi(\nu_{di}) - \Psi( \nu_{d0} ) + \Psi( \lambda_{iw_{dn}} ) - \Psi( \lambda_{i0}  ) - \log \rho_{dni} \bigg] }.
\label{eqn:terms_rho}
\end{equation}
Here and in what follows, we use the notation $\mathscr{L}'_{[\rho_{dn}]}$ to indicate the subset of terms in~$\mathscr{L}'$ that have some dependence on the subscripted parameters.
Optimizing eq.~(\ref{eqn:terms_rho}) subject to the multinomial constraint $\sum_{i=1}^K{\rho_{dni}} = 1$,
we obtain the variational update:
\begin{equation}
\rho_{dni} \propto  \exp \big\{ \Psi(\nu_{di}) +  \Psi( \lambda_{iw_{dn}} ) - \Psi( \lambda_{i0}  )  \big\}. \nonumber
\end{equation}
Similarly, we can isolate the terms in $\mathscr{L}'$ that depend on the variational parameters $\nu_d$.  These terms are given by:
\begin{equation}
\mathscr{L}'_{[\nu_{d}]} =  \sum_{i=1}^{K}{ \bigg[ \Psi( \nu_{di} ) - \Psi( \nu_{d0} )  \bigg]  \left[ \alpha_{\pi(d)} \frac{ \nu_{\pi(d)i} }{ \nu_{\pi(d)0} } + \sum_{n=1}^{N_d}{ \rho_{dni} - \nu_{di} } \right]  }\  -\ \log \Gamma( \nu_{d0} )\ +\  \sum_{i=1}^{K}{ \log \Gamma(\nu_{di})  }. \nonumber
\end{equation}
We obtain the variational update for the topic proportion parameters by optimizing $\mathscr{L}'_{[\nu_{d}]}$ with respect to $\nu_{dj}$.  Taking the derivative, we find:
\begin{eqnarray*}
\frac{\partial\mathscr{L}'}{\partial\nu_{dj}} 
  & =  & \sum_{i=1}^{K}{ \left\{\frac{\partial}{\partial\nu_{dj}}\bigg[ \Psi( \nu_{di} ) - \Psi( \nu_{d0} )  \bigg]\right\}\!\!  \left[ \alpha_{\pi(d)} \frac{ \nu_{\pi(d)i} }{ \nu_{\pi(d)0} } + \sum_{n=1}^{N_d}{ \rho_{dni} - \nu_{di} } \right]  } \\ \\
  & & -\ \bigg[ \Psi( \nu_{dj} ) - \Psi( \nu_{d0} )  \bigg]\ +\ \bigg[ \Psi( \nu_{dj} ) - \Psi( \nu_{d0} )  \bigg].
\end{eqnarray*}
The terms in the second line of this expression cancel each other out.  The variational update is obtained by requiring the remaining term in the first line to vanish as well.  The resulting update is given by:
\begin{equation}
\nu_{di}\ =\  \alpha_{\pi(d)} \frac{ \nu_{\pi(d)i} }{ \nu_{\pi(d)0} }\ +\ \sum_{n=1}^{N_d}{ \rho_{dni} }. \nonumber
\end{equation}
The form of this update has a simple intuition---namely, that the prior over each document's topic proportions~$\theta_d$ is inherited from its parent category~$\pi(d)$ with a concentration parameter of $\alpha_{\pi(d)}$ and a base measure of $\frac{ \nu_{\pi(d)}}{\nu_{\pi(d)0}}$.
\subsection{Inference on Categories}
\label{app:max_cat_params}

There are variational parameters $\nu_t$ for the topic proportions of each category.  Once again we vary these parameters to maximize the lower bound~$\mathscr{L}'$ on the log-likelihood.  For mathematical convenience, we decompose $\nu_t$ into a concentration parameter $\tau_t$ and a base measure $\kappa_t$, and we estimate these quantities separately~\citep{Minka:pjtYdr1C}.

To maximize $\mathscr{L}'$ with respect to $\kappa_t$, we use a Newton's method with equality constraints~\citep{boyd_convex_opt_2004}.  By virtue of the decomposition, we will see that each Newton update takes only linear time in $K$, the number of topics.  We begin by extracting the terms that involve $\kappa_t$ from the lower bound~$\mathscr{L}'$ on the log-likelihood:
\begin{align}
\mathscr{L}'_{[\kappa_t]} = &\sum_{i=1}^{K}{ \Psi(\tau_t \kappa_{ti})\! \left[ \delta_{t0} \frac{\gamma}{K} + (1 - \delta_{t0}) \alpha_{\pi(t)} \kappa_{\pi(t)i} + |C_t| (1 - \alpha_t \kappa_{ti}) - \tau_t \kappa_{ti} \right] } \nonumber \\
	&- |C_t| \sum_{i=1}^{K}{ \Big[ \log \Gamma( \alpha_t \kappa_{ti}) + (1 - \alpha_t \kappa_{ti}) \log \kappa_{ti} \Big] } + \sum_{i=1}^{K}{ \log \Gamma(\tau_t \kappa_{ti}) } \nonumber  \\
	&+ \alpha_t \sum_{i=1}^{K}{ \kappa_{ti} \left[  \sum_{d \in C_t}{ \Psi(\nu_{di}) } + \sum_{c \in C_t}{ \Psi(\tau_c \kappa_{ci}) }  \right]  },
\label{eqn:elbo_kappa}
\end{align}
where the discrete delta function $\delta_{t0}$ equals $1$ if $t$ equals $0$ (meaning $t$ represents the root node) and equals zero otherwise.  Here again we have used $C_t$ to denote the set of indexes for the subcategories and documents of category $t$.  From eq.~(\ref{eqn:elbo_kappa}), it can be verified that the Hessian of $\mathscr{L}'$ with respect to $\kappa_t$ is diagonal.  Let $g$ and $h$ be $K$-dimensional vectors that denote, respectively, the gradient with respect to $\kappa_t$ and the diagonal entries of the Hessian matrix.  Then the constrained Newton step $\Delta \kappa_t$ is given by:
\begin{equation}
	\begin{bmatrix}
		\textrm{diag}(h) & \mathbf{1} \\
		\mathbf{1}^T & 0
	\end{bmatrix}
	\begin{bmatrix}
		\Delta \kappa_t \\
		u
	\end{bmatrix}
	=
	\begin{bmatrix}
		-g \\
		0
	\end{bmatrix},
\label{eqn:matrix_kappa}
\end{equation}
where $u$ denotes the dual variable for the sum-to-one constraint.
Solving the linear set of equations in eq.~(\ref{eqn:matrix_kappa}), we obtain the update rule:
\begin{equation}
	\Delta \kappa_t = \left\{ \frac{ \sum_{i=1}^K{\frac{g_i}{h_i}} } { \sum_{i=1}^K{ \frac{1}{h_i}} } \right\}
	\begin{bmatrix}
	\frac{1}{h_1} \\
	\vdots \\
	\frac{1}{h_K} \\
	\end{bmatrix}
	-
	\begin{bmatrix}
	\frac{g_1}{h_1} \\
	\vdots \\
	\frac{g_K}{h_K} \\
	\end{bmatrix}. \nonumber
\end{equation}
It is straightforward to verify that $\sum_i \Delta\kappa_{ti} = 0$, and hence, by construction, this update preserves the normalization $\sum_i \kappa_{ti}=1$.
The step size is found by a backtracking line search~\citep{boyd_convex_opt_2004}.
We also guarantee that the components of the base measure remain nonnegative.

\if 0
\begin{equation}
	\Delta \kappa_t = \bigg\{ \sum_{i=1}^K{\frac{g_i}{h_i}} \bigg\} /  \bigg\{ \sum_{i=1}^K{h_i^{-1}} \bigg\}
	\begin{bmatrix}
	\frac{1}{h_1} \\
	\vdots \\
	\frac{1}{h_K} \\
	\end{bmatrix}
	-
	\begin{bmatrix}
	\frac{g_1}{h_1} \\
	\vdots \\
	\frac{g_K}{h_K} \\
	\end{bmatrix}. \nonumber
\end{equation}
\fi

To maximize $\mathscr{L}'$ with respect to $\tau_t$, we begin again by extracting those terms in the lower bound that depend on $\tau_t$.  These terms are given by:
\begin{eqnarray}
\mathscr{L}'_{[\tau_t]} 
  & = & \sum_{i=1}^K{ \left\{\Big[ \Psi(\tau_t \kappa_{ti}) - \Psi( \tau_t ) \Big] \left[ \delta_{t0} \frac{\gamma}{K} + (1-\delta_{t0}) \alpha_{\pi(t)} \kappa_{\pi(t)i} + |C_t| (1 - \alpha_t \kappa_{ti}) - \tau_t \kappa_{ti} \right]\right\}} \nonumber \\
 & &  -\ \log \Gamma( \tau_t )\ -\ |C_t| \alpha_t \tau_t^{-1} (K - 1)\ +\ \sum_{i=1}^{K}\,{\log \Gamma( \tau_t \kappa_{ti})}.
\label{eqn:elbo_tau}
\end{eqnarray}
It is straightforward to compute the first and second derivatives of eq.~(\ref{eqn:elbo_tau}) with respect to~$\tau_t$.
We alternate maximizing $\mathscr{L}'$ with respect to $\kappa_t$ and $\tau_t$ until convergence.

\subsection{Inference on Topics}
\label{app:max_topic_params}

There is, finally, a variational parameter $\lambda_k$ for each topic's distribution over words $\beta_k$.
We update the parameters $\lambda_k$ (see Algorithm 1) after each sweep of the recursion from the documents to the root node.  To maximize $\mathscr{L}'$ with respect to $\lambda_k$, we first isolate the terms in the bound that involve $\lambda_k$.  These terms are given by:
\begin{equation}
\mathscr{L}'_{[\lambda_k]} =  \sum_{v=1}^V \Big[  \Psi(\lambda_{kv}) - \Psi( \lambda_{k0} ) \Big] \Big[  \frac{ \eta }{ V } + \sum_d{ \sum_{n=1}^{N_d} \rho_{dnk} w_{dn}^{v}} - \lambda_{kv} \Big]\  -\ \log \Gamma( \lambda_{k0} )\ +\ \sum_{v=1}^V \log \Gamma( \lambda_{kv} ),  \nonumber
\end{equation}
where $w_{dn}^{v}\in\{0,1\}$ is an indicator variable, equal to one only when $w_{dn}$ equals $v$, and where as shorthand we use $\lambda_{k0} = \sum_v \lambda_{kv}$.  The update for $\lambda_k$ is computed by taking its derivative and setting it equal to zero.  Not surprisingly, this update takes a similar form as the update for $\nu_d$ in appendix~\ref{app:max_doc_params}.  Specifically, we have:
\begin{equation}
\lambda_{kv} = \frac{ \eta }{ V }\ +\ \sum_d{ \sum_{n=1}^{N_d}{ \rho_{dnk} w_{dn}^{v} } }.  \nonumber
\end{equation}

\subsection{Parameter Estimation}
\label{app:estimate_params}

In the M-step of the variational EM algorithm, we must update the model parameters~$\gamma$ and~$\eta$ (for the corpus as a whole) and the concentration parameter $\alpha_t$ for each category~$t$.
Again we estimate these parameters by maximizing the lower bound~$\mathscr{L}'$ on the log-likelihood.
In this case the relevant terms in $\mathscr{L}'$ are given by:
\begin{eqnarray}
\mathscr{L}'_{[\gamma]} 
 & = & \ \log\Gamma( \gamma )\ -\ K \log\Gamma\!\left( \frac{ \gamma }{ K } \right)\ +\ \left(\frac{ \gamma }{ K } -1\right) \sum_{i=1}^K{ \Big[ \Psi ( \nu_{0i} ) - \Psi ( \nu_{00} ) \Big] } \nonumber \\
\mathscr{L}'_{[\eta]} 
  & =& \ K \log\Gamma( \eta )\ -\ K V \log\Gamma\!\left( \frac{ \eta }{ V }\right)\ +\ \left(\frac{ \eta }{ V } -1\right) \sum_{i=1}^K{ \sum_{v=1}^V{ \Big[ \Psi ( \lambda_{iv} ) - \Psi ( \lambda_{i0} ) \Big] } } \nonumber \\
\mathscr{L}'_{[\alpha_t]} 
  & =& \ |C_t| \left\{ \log \Gamma( \alpha_t ) + \alpha_t \Psi(\nu_{t0}) - \frac{\alpha_t}{\nu_{t0}} (K-1) -  \sum_{i=1}^{K}{ \left[ \log \Gamma\!\left(\frac{ \alpha_t \nu_{ti} }{ \nu_{t0} }\right) - \frac{ \alpha_t \nu_{ti} }{ \nu_{t0} } \left(\log \frac{\nu_{ti}}{\nu_{t0}} - \Psi( \nu_{ti} ) \right)  \right] }  \right\} \nonumber \\
	& & +\ \alpha_t \sum_{i=1}^{K}{ \frac{ \nu_{ti} }{ \nu_{t0} } \left\{  \sum_{c \in C_t}{ \Big[ \Psi(\nu_{ci}) - \Psi( \nu_{c0} ) \Big] } + \sum_{d \in C_t}{ \Big[ \Psi(\nu_{di}) - \Psi( \nu_{d0} ) \Big] } \right\} }. \nonumber
\end{eqnarray}
For the M-step of the variational EM algorithm, we update the model parameters by computing first and second derivatives of these terms in~$\mathscr{L}'$ and using Newton's method.

\vskip 0.2in
\bibliography{paper}

\end{document}